\newcommand{\AIXI}{\textsc{AIXI}}
\newcommand{\UCAI}{\textsc{UCAI}}
\newcommand{\UAI}{\textsc{UAI}}
\newcommand{\AIXItl}{\textsc{AIXI}$tl$}
\newcommand{\MCAIXI}{\textsc{MC-AIXI(FAC-CTW)}}
\newcommand{\dwcable}{digit-wise computable}
\newcommand{\dbl}[1]{\left\llbracket#1\right\rrbracket}
\newcommand{\xs}{x_\infty}
\newcommand{\ys}{y_\infty}
\newcommand{\zs}{z_\infty}
\newcommand{\sss}{s_\infty} %
\newcommand{\opluss}{\oplus_\infty}
\newcommand{\masae}{\mbox{\ae}}
\DeclareMathOperator*{\append}{+\!\!\!+} %
\newcommand{\soesum}[1]{\sum\limits_{\substack{
                                 q\in#1,\\
                                 \dbl{q}(a_{1..m(k)}) = e_{1..m(k)}
                               }}}
\newcommand{\soes}[2]{\sum\limits_{\substack{
                                 q\in#2,\\
                                 \dbl{q}(a_{1..#1}) = e_{1..#1}
                               }}}
\newcommand{\res}{\left(\sum_{t=k}^{m(k)}r(e_t)\right)}
\newcommand{\aref}[1]{Appendix~\ref{#1}}
\newcommand{\terminating}{model of terminating computation}
\newcommand{\terminatings}{models of terminating computation}
\newcommand{\lhs}{left-hand side}
\newcommand{\rhs}{right-hand side}
\newcommand{\cmp}{\mathbin{?}}
\DeclareMathOperator*{\argmax}{arg~max} %
 \newtheorem{definition}{Definition}
\newtheorem{lemma}{Lemma}
\newtheorem{theorem}{Theorem}
\newtheorem{corollary}{Corollary}
\newtheorem{remark}{Remark}
\begin{document}

 \title{Computable Variants of \AIXI{} \\ which are More Powerful than \AIXItl{}}
  \author{Susumu Katayama, University of Miyazaki}
\date{Jan. 24, 2019}

 \maketitle

 \begin{abstract}
   This paper presents Unlimited Computable AI, or \UCAI{}, that is a family of computable variants of \AIXI{}.
  \UCAI{} is more powerful than \AIXItl{}, which is a conventional family of computable variants of \AIXI{}, in the following ways:
 1) \UCAI{} supports \terminatings{}, including typed lambda calculi,
    while \AIXItl{} only supports Turing machine with timeout $\tilde{t}$, which can be simulated by typed lambda calculi for any $\tilde{t}$;
 2) unlike \UCAI{}, \AIXItl{} limits the program length to some $\tilde{l}$.
 \end{abstract}

 \section{Introduction}

 \AIXI{} \citep{topdown} is an AI model for theoretical discussion of the limitations of AI. \AIXI{} is expected to be universal.
\AIXI{} models the environment as a Turing machine and formalizes the interaction between the environment and the AI agent as a discrete-time {\em reinforcement learning} \citep[e.g.,][]{SuttonBarto} problem, that is an optimization problem under unknown environment.
\AIXI{} is not computable without approximation.

 \AIXItl{} and AIXI$\tilde{t}\tilde{l}$ \citep{topdown} are computable variants of \AIXI{}, but they have limitations on the description length of the environment and the computation time for each time step when computed by a sequential Turing machine.
 Especially, the limitation on the program description length may be a problem when dealing with the universe whose size is unknown beforehand.

In this paper, we introduce 
 AI models that generalize
  the definition of \AIXI{} to support a broader class of models of computation as the environment model than Turing machines.
Then, we show
  the computability of our AI models 
  adopting \terminatings{} such as typed $\lambda$ calculi.
Because the model of computation used by \AIXItl{} and AIXI$\tilde{t}\tilde{l}$ is ``Turing machines with timeout $\tilde{t}$ within length $\tilde{l}$'', which is a more limited class than structural recursive functions, our AI models %
 cover a more general environment class.
We claim that our introduction of such computable AI models which are more powerful than known models is useful for research on the theoretical limitations of computable AI models.

 We prove two main
 theorems about Unlimited AI (\UAI{}), that is our formalization of generalized \AIXI{}, when using a \terminating{} as the environment model.

 The first theorem shows that the {\em action value function} (representing how valuable each action in each situation is) of \UAI{} is computable to arbitrary precision.
 Roughly speaking, this result means that \UAI{} is at least as computable as $Q$-learning \citep{WD92}:
 \UAI{} and $Q$-learning both select an action at each time step by $\argmax$ operation on real-valued action value functions,
 but strictly speaking, the $\argmax$ operation is not exactly computable if different actions may have the same action value, because a comparison of exactly the same real values requires comparisons of an infinite number of digits.
 Practically, however, $Q$-learning is considered computable, by giving up computation at some precision.

 The other theorem shows that
 \UAI{} becomes exactly computable including the $\arg\max$ operation
   with probability 1 by randomly selecting the least significant bits of each prior probability.
   We call the resulting AI model Unlimited Computable AI (\UCAI{}).

Making \AIXI{} computable involves solving the following three problems:
\begin{description}
 \item[Problem 1] programs that are candidate models of the environment can enter infinite loops;
 \item[Problem 2] the agent needs to compute an infinite series because there are an infinite number of candidate models of the environment;
 \item[Problem 3] all operations are on real values. %
\end{description}

 \AIXItl{} deals with the above %
 problems in the following way:
\begin{enumerate}
 \item it deals with Problem~1 by limiting the computation time by introducing a timeout;
 \item it deals with Problem~2 by limiting the number of programs finite by limiting the length of programs.
\end{enumerate}
 Although Problem~3 must also be considered for strictly theoretical discussion, it is just ignored in the papers on \AIXItl{} \citep[e.g.,][]{topdown}, and it is not explicitly solved.
 However, if the set of rewards is limited to rational numbers, all the computation can be executed as operations on rational numbers, because the number of programs is
 finite
 in \AIXItl{}.
 Throughout this paper, we assume that the reward can only take rational numbers.

On the other hand, our \UCAI{} algorithm deals with the three problems in the following way:
\begin{enumerate}
 \item
 by not limiting the environment model to Turing machines and permitting \terminatings{}, it deals with Problem~1 while supporting models of computation which are more powerful than Turing machines with any timeout;
 \item
  it deals with Problem 2 and 3 without limiting the program length by applying
  the technology called {\em exact real arithmetic} \citep[e.g.,][]{BoehmCartwright1993}, which enables exact computation with real values under some limitations.
  Intuitively, the idea is to represent bounded real values as lazy infinite streams of digits.
\end{enumerate}

The rest of this paper is organized in the following way. 
Section~\ref{sec:AIXI} introduces \AIXI{}.
Section~\ref{sec:exactReal} introduces exact real arithmetic and its implementation using lazy evaluation.
 Section~\ref{sec:contributions} defines \UCAI{} %
and proves that \UCAI{} is exactly computable.
 Section~\ref{sec:conclusions} summarizes the results and discusses what the results mean.

\section{Preparation}

This section provides the definitions of the concepts used in this paper.

\subsection{Finite List}
Before defining \AIXI{}, we define (finite) lists, because \AIXI{} deals with the discrete time sequence. Infinite lists or streams will be defined in Section~\ref{sec:exactReal}.

\begin{definition}[Finite List]\label{def:list}
 A finite list is either the empty list $[]$ or the result of prepending an element to a finite list.
 $:$ denotes the prepending operator, corresponding to the cons function in Lisp.

 $:$ is right associative.
 For example, the list $x_1:(x_2:(x_3:[]))$ can be written as $x_1:x_2:x_3:[]$ by omitting the parentheses,
 which is the list consisting of $x_1$, $x_2$, and $x_3$ in this order.

 $A{*}$ denotes the type of lists of $A$'s for any type $A$ in general.
\end{definition}

For lists we define the concatenation operator $\append$, such that $(a_1:a_2:[])\append (b_1:b_2:b_3:[]) = a_1:a_2:b_1:b_2:b_3:[]$.
\begin{definition}[Concatenation]
 \begin{align}
     []      \append y &= y \\
     (x_1:x) \append y &= x_1 : (x \append y)
 \end{align}
\end{definition}
$\append$ is also right-associative.

\subsection{AIXI}\label{sec:AIXI}

 \AIXI{} \citep{topdown} is an AI model for theoretical discussion of the limitations of AI. \AIXI{} is expected to be universal.

 Mostly, \AIXI{} is based on %
 the general reinforcement learning framework:
\begin{itemize}
 \item at each time step, the learning agent and the environment interact via {\em perceptions} and {\em actions}: the agent chooses an action based on the interaction history and sends it to the environment, then, the environment chooses a perception based on the history and sends it to the agent, then again the agent chooses an action based on the new history and sends it to the environment, and so on;
 \item the perception includes the information of {\em reward}, which reflects how well the agent has been behaving; the agent's purpose is to maximize the {\em expected return}, that is the expectation of the total sum of future reward;
 \item the environment is unknown to the agent.
     It may even change in time.
\end{itemize}

\AIXI{} models the environment as a Turing machine. It estimates the environment in the way that higher prior probabilities are assigned to simpler programs.
 At each time step, it selects the action which maximizes the expected return weighted by the belief assigned to each Turing machine.

  \begin{remark}
  Turing machines and other models of computation also have the idea of the number of computation steps for measuring the computation time.
  Although we do not mention each computation step in this paper, we often use the term ``interaction step'' instead of ``time step'' in order to emphasize that we are not talking about computation steps.
  \end{remark}

\AIXI{} has the following parameters:
\begin{itemize}
 \item the finite non-empty set of actions ${\cal A}$,
 \item the finite non-empty set of observations ${\cal O}$,
 \item the finite and bounded set of rewards ${\cal R} \subset [r_{\min}, r_{\max}]$, and %
 \item the horizon function $m \in \mathds{N}\rightarrow\mathds{N}$ such that $m(k)\ge k$.
\end{itemize}
Although \citet{topdown} assumes non-negative rewards, we omit this limitation because this is requested only by \AIXItl{} which is an \AIXI{} approximation, and
also because this is easily amendable.
Also, we do not often mention ${\cal O}$, but instead, the set of perceptions ${\cal E} = {\cal O} \times {\cal R}$ and the projection function $r \in {\cal E} \rightarrow {\cal R}$.
\footnote{Throughout this paper, we use `$\in$' instead of `$:$' even for functions for readability, in order to avoid the name collision against the `$:$' operator defined in Definition~\ref{def:list}. Its fixity is weaker than $\rightarrow$, and thus $r \in {\cal E} \rightarrow {\cal R}$ means $r \in ({\cal E} \rightarrow {\cal R})$.}

At each time step $k$, \AIXI{} computes the action $a_k$ from the interaction history based on the following equations.
 Firstly, the action value function $Q \in ({\cal A}\times{\cal E}){*} \times {\cal A} \rightarrow \mathds{R}$ computing the expected return for each action based on the current history
 is defined as follows:
\begin{align}
 \MoveEqLeft Q(\masae_{1..k-1}, a_k)\\
 &= \sum_{e_k\in {\cal E}} \max_{a_{k+1}\in{\cal A}} \sum_{e_{k+1}\in {\cal E}} \max_{a_{k+2}\in{\cal A}} ... \sum_{e_{m(k)-1}\in {\cal E}} \max_{a_{m(k)}\in{\cal A}} \sum_{e_{m(k)}\in {\cal E}} \\
 &\quad \left(\res\soesum{T}\xi(q)\right) %
\end{align}
 where $e_k$ denotes the observation at time $k$.
 In general, $v_{m..n}$ denotes $v_m : v_{m+1} : ... : v_{n-1} : v_n : []$, %
 i.e., the sequence from time $m$ to time $n$ of time-varying variable $v$.
 $\masae_k$ denotes $(a_k,e_k)$ that is the action-perception pair at time $k$.
 Thus, $\masae_{1..k-1}$ is the interaction history $(a_1,e_1):...:(a_{k-1},e_{k-1}):[]$ at time $k$.
Also note that
\begin{align}
a_{1..m(k)} &= a_{1..k-1}\append a_k:a_{k+1..m(k)} \\
e_{1..m(k)} &= e_{1..k-1}\append e_{k..m(k)}
\end{align}
(because $1\le k\le m(k)$), and that $a_{1..k-1}$, $a_k$, $a_{k+1..m(k)}$, $e_{1..k-1}$, and $e_{k..m(k)}$ are bound in the different ways.

$T$ is the set of monotone Turing machines.
$\xi \in T \rightarrow [0,1]$ is the {\em universal prior} defined as
\begin{align}
 \xi(q) &= 2^{-l(q)} \label{eq:originalUniversalPrior}
\end{align}
where $l(q)$ is the length of $q$ in the prefix code.
The universal prior is designed to prefer simple programs by assigning lower probabilities to longer programs.

\begin{equation}
 \dbl{q}(a_{1..m(k)}) = e_{1..m(k)} \label{eq:dblqAIXI}
\end{equation}
 denotes the condition that $q$ behaves interactively,
 taking $a_1$ as the input, returning $e_1$ as the output,
 taking $a_2$ as the input, returning $e_2$ as the output, and so on, and
 taking $a_{m(k)}$ as the input, and returning $e_{m(k)}$ as the output.

Based on the action value function $Q$ defined above, \AIXI{} always chooses the best action, i.e., the action $\dot{a}_k$ at time $k$ after the actual interaction history $\dot{\masae}_{k-1}$ such that
\begin{equation}
 \dot{a}_k = \argmax_{a\in{\cal A}} Q(\dot{\masae}_{1..k-1}, a) \label{eq:AIXIa}
\end{equation}

\subsection{Exact Real Arithmetic and Lazy Evaluation}\label{sec:exactReal}
 Exact real arithmetic \citep[e.g.,][]{BoehmCartwright1993} is a set of techniques for effectively implementing exact computations over the real numbers.
 Because the set of real numbers is a continuum and each real number contains the information of an infinite number of digits%
, the reader may doubt if it is even possible.
 Actually, that is not a problem because the set of real numbers that can be uniquely defined by a finite program is countable.

 There are two main kinds of approaches to representing exact real values.\citep{Plume98}
 One represents them as a function taking the precision returning an approximation to the precision \citep{BoehmCartwright1993}, and the other represents the mantissa as a lazy infinite stream of digits or other integral values.
The latter stream-based approach has various representations, which are reviewed by \citet{Plume98}.

 Lazy functional languages such as Haskell adopt the lazy evaluation model
 and can deal with infinite data structures such as infinite lists (a.k.a. streams) and infinite trees.
 The idea of lazy evaluation is to postpone computation until it is requested.
 Even if the remaining computation will generate infinite data, the data structure can hold a {\em thunk}, or a description of the remaining computation, until a more precise description is requested.

 Now we give a definition of a stream.
\begin{definition}[Stream]\label{def:stream}
 A {\em stream} is an infinite list.
 An infinite list is the result of appending an element in front of an infinite list using the cons $:$ operator.
 (We use the same letter $:$ for both finite and infinite lists.)
 
 Again, $:$ is right associative.
 For example, the stream $x_1:(x_2:(x_3:...$ can be written as $x_1:x_2:x_3:...$ by omitting the parentheses.

 $A\infty$ denotes the type of streams of $A$'s for any type $A$ in general.
\end{definition}
 This is an intuitive definition.
 More interested readers should read a functional programming textbook \citep[e.g.,][]{BirdWadler}.

In this paper, stream variables are suffixed with $\infty$ for readability, such as $x_\infty$.

We prove the computability of our \AIXI{} variants when using redundant binary (a.k.a. signed binary) stream representation.
For simplicity, we normalize the set of rewards in order for all the computations to be executed within $(0,1)$, and stick to fixed point computations.
\begin{definition}[Fixed point redundant binary stream representation, FPRBSR]
 Let $x \in [-1,1]$.
 A {\em fixed point redundant binary stream representation}, or {\em FPRBSR} in short, of $x$ is a stream $x_1:x_2:x_3:...$ where
 \[
  x = \sum_{i=1}^\infty 2^{-i}x_i
 \]
 and
 \[
  x_i \in \{-1, 0, 1\}
 \]
 hold for each $i$.

 $\dbl{\cdot} \in \{-1, 0, 1\}\infty \rightarrow [-1,1]$ denotes the interpretation of the given FPRBSR, or conversion to $[-1,1]$.
 \[
  \dbl{x_1:x_2:x_3:...} = \sum_{i=1}^\infty 2^{-i}x_i
 \]
 holds.

\end{definition}

 In order to make sure that such infinite computations generate results infinitely, namely, to make sure the computability to arbitrary precisions,
 it is enough to show that some initial element(s) (or digit(s), in the case of a stream of digits) and the data representing the remaining computation can be computed at each recursion.

\begin{definition}[Digit-wise computability]
 Function $f$ returning an FPRBSR is {\em digit-wise computable} iff it infinitely generates the resulting stream, i.e., for any precision $p$ there always exists the time $t$ when the result is computed to the precision $p$.
\end{definition}

Digit-wise computability is the computability of FPRBSR.
It means the value can be computed to any given precision.
it is just called computability in the literature \citep[e.g.,][]{Plume98}.

Digit-wise computability should not be confused with computability in the limit.
Obviously, the former is stronger than the latter.
The results which are obtained during the process of digit-wise computations are exact and definite, and can never be overwritten. %
On the other hand, computability in the limit does not care about the process nor guarantee any truth of the results obtained during the computation.

In order to prove that the function $f$ is \dwcable{}, it is enough to show that $f$ can be represented as
\[
 f(x) = g(x) : (h(f))(x)
\]

 We need to request that once the first elements of streams, or the most significant digits in the case of streams of digits, are computed and fixed, they must not change later by carrying. %
 In fact, the usual binary representation does not always satisfy this rule: e.g., when we are adding $0.01010...$ to $0.00101...$, whether
 the most significant digit %
 of the result
 is $0$ or $1$
 may be undecided
 forever.
 Instead, we use the redundant binary representation which only requests finite times of carrying out for most arithmetic operations, for avoiding this problem.

The most serious limitation of exact real arithmetic is that comparisons between two exactly the same numbers do not terminate (unless we know that they are the same beforehand), because such comparisons mean comparisons between exactly the same infinite lists.
 We can still compare two different numbers to tell which is the greater.

 In the defining equations of \AIXI{}, this limitation only affects their $\arg\max$ operations. %
 (It does not affect the $\max$ operations, as shown in Lemma~\ref{lemma:max}.)
 This means that it is very difficult to exactly choose the best action %
  when the best and the second best action values are almost the same.
 In practice, however, there may be cases where we can ignore such small differences and use fixed-precision floating point approximations instead of exact real numbers.

 This paper proves that the action values of our \AIXI{} generalization can be computed to arbitrary precisions for arbitrary prior distributions when a \terminating{} instead of a Turing machine is used (Theorem~\ref{th:UAIQ}), and that the whole computation, including the $\arg\max$ operation, is exactly computable
  with probability 1 if we modify the least significant digits of probabilities of the prior distribution to random irrational values
  (Theorem~\ref{th:UCAI}).

\section{Contributions}\label{sec:contributions}

In this section, we introduce our \AIXI{} variant which supports more powerful models of computation than what \AIXItl{} uses and is still computable.
For this, we start with generalizing \AIXI{} to the necessary level, and then we specialize it to obtain computable models.

\subsection{Generalizing \AIXI{}}

In this section, we generalize \AIXI{} by generalizing the model of computation from monotone Turing machine to other models and generalizing the prior distribution function on it.

\subsubsection{Generalizing the Model of Computation}\label{sec:generalize}

  \AIXI{} uses universal prefix Turing machine, that executes any program in a prefix code.
 The program is usually written as an encoding of a Turing machine or in a Turing-complete language,
 but it can be written in a terminating language such as typed $\lambda$ calculi and functional languages extending them.

 Although the reader may think that using a Turing-incomplete model of computation is unacceptable,
 it should be better than \AIXItl{},
 because ``Turing machines with a timeout'', which \AIXItl{} uses, can be simulated by typed $\lambda$ calculi.
 
  When using a terminating functional language, the effect of using {\em monotone} Turing machines can be achieved by using lazy I/O, %
 i.e., by modeling the input and the output as streams.
 (Moreover, even when streams are not available, the same computational ability is achieved by supplying the interaction history instead of supplying only the current perception as the current input, though the efficiency is sacrificed.)

 The idea of termination and lazy I/O can coexist. For example, Agda is a computer language equipped with both.
 The careful reader should recall that similarly monotone Turing machines may or may not enter an infinite loop while computing
 each output in reply to each input.

Extending the available set of models of computation to include $\lambda$ calculi and functional languages has another bonus of enabling incremental learning by assigning biased prior probabilities to prioritize expressions with useful functionality.\citep{Katayama16}
We do not discuss this further in this paper.

\subsubsection{Assigning Prior Distributions}\label{sec:assign}
  For any model of computation ${\cal M}$ not limited to Turing machines,
 its universal prior can be defined in the same way as Eq.~\eqref{eq:originalUniversalPrior}:
 \begin{align}
 \xi_1 &\in {\cal M} \rightarrow [0,1] \\
 \xi_1(q) &= 2^{-l(q)} \label{eq:generalPrior}
\end{align}
  where $l(q)$ denotes the length of $q$ in the prefix code. %
 The prefix code should be a compressed one based on the grammar
 when considering efficiency,
 though usual programs in plain text are already in prefix codes.
 Adequate selection of a concise prefix code helps to avoid grammatically-incorrect programs.
 
Note that there can be grammatically-correct, but type-incorrect programs, in the same way as non-terminating programs of Turing machines.
Such programs should be selected with probability $0$, and the actual probabilities for valid programs should be
those divided by $\sum_{q\in{\cal V}}\xi_1(q)$, where ${\cal V}$ denotes the set of valid programs.
 Operationally, this is equivalent to retrying generation of syntax tree when
  Eq.~\eqref{eq:generalPrior}
  results in an invalid program.
There is no problem, because the probability of generating a valid terminating program based on
  Eq.~\eqref{eq:generalPrior}
  is constant if the language is fixed,
 and because the result of Eq.~\eqref{eq:AIXIa} would not be affected by multiplication by a constant.

For any finite program $q \in {\cal M}$, $\xi_1(q)$ is rational, though the actual prior
\[
 \frac{\xi_1(q)}{\sum_{q\in{\cal V}}\xi_1(q)}
\]
may not.

\subsubsection{Formalizing Generalized \AIXI{}}

\UAI{}, or {\em Unlimited AI}, is our formalization of generalized \AIXI{} supporting the above generalizations.
A \UAI{} has the following parameters:
\begin{itemize}
 \item the non-empty finite set of actions ${\cal A}$,
 \item the non-empty finite set of observations ${\cal O}$,
 \item  the finite and bounded set of rational rewards ${\cal R} \subset [r_{\min}, r_{\max}] \wedge \mathds{Q}$ where $r_{\min}<r_{\max}$ and $r_{\min}, r_{\max}\in \mathds{Q}$, %
 \item the horizon function $m \in \mathds{N}\rightarrow\mathds{N}$ such that $m(k)\ge k$,
 \item the model of computation ${\cal M}$, which can be viewed as a set of programs taking a lazy list of actions and returning a lazy list of perceptions,
 \item the prior distribution function $\rho \in {\cal M} \rightarrow [0,1]$
\end{itemize}
\UAI{}$({\cal A},{\cal O},{\cal R},m,{\cal M},\rho)$ denotes \UAI{} with the above parameters.
The definition of the set of perceptions ${\cal E} = {\cal O} \times {\cal R}$ and the projection function $r \in {\cal E} \rightarrow {\cal R}$ are the same as those of \AIXI{}'s.

The action value function $Q \in ({\cal A}\times{\cal E}){*} \times {\cal A} \rightarrow \mathds{R}$ of \UAI{}$({\cal A},{\cal O},{\cal R},m,{\cal M},\rho)$ is defined as follows:
\begin{align}
 \MoveEqLeft Q(\masae_{1..k-1}, a_k)\nonumber\\
 &= \sum_{e_k\in {\cal E}} \max_{a_{k+1}\in{\cal A}} \sum_{e_{k+1}\in {\cal E}}   \max_{a_{k+2}\in{\cal A}} %
                          ... \max_{a_{m(k)}\in{\cal A}} \sum_{e_{m(k)}\in {\cal E}}  \nonumber\\
 &\quad
    \left(\res\soesum{{\cal M}}\rho(q)\right) \label{eq:UAIQ}
\end{align}
where
\[
 \dbl{q}(a_{1..m(k)}) = e_{1..m(k)}
\]
 denotes the condition that $q$ behaves interactively
 in the same way as in the case of Eq.~\eqref{eq:dblqAIXI},
 if $q$ is an interactive program
 such as that of monotone Turing machines and an implementation using lazy I/O.
${\cal M}$ can be a model of computation without interaction;
in such cases, $q'$ such that
\begin{align}
q'(a_{1..1}, []) &= e_1 \label{eq:qprime} \\ %
q'(a_{1..2}, e_{1..1}) &= e_2 \\
    \vdots \qquad  & \qquad \vdots \\
q'(a_{1..m(k)},e_{1..m(k)-1}) &= e_{m(k)}
\end{align}
has to be implemented, which requires recomputation of the states of the environment.
\footnote{We wrote $a_{1..1}$ and $e_{1..1}$ instead of $a_1$ and $e_1$ respectively,
because they are actually {\em lists of} actions and perceptions, i.e., $a_1:[]$ and $e_1:[]$.}

 Based on the action value function $Q$ defined above, \UAI{}$({\cal A},{\cal O},{\cal R},m,{\cal M},\rho)$ always chooses the best action, i.e., the action $\dot{a}_k$ at time $k$ after the actual interaction history $\dot{\masae}_{k-1}$ such that
\begin{equation}
 \dot{a}_k = \argmax_{a\in{\cal A}} Q(\dot{\masae}_{1..k-1}, a) \label{eq:UAIa}
\end{equation}

 \AIXI{} can be represented as \UAI{}$({\cal A},{\cal O},{\cal R},m,T,\xi)$ using the set of monotone Turing machines $T$ and the universal prior $\xi$.
 Although \AIXI{} does not explicitly limit the set of rewards to rational numbers unlike \UAI{}, how to represent real numbers is not discussed by papers on \AIXI{}.

\subsubsection{Computability of \UAI{}}
Is \UAI{} implementable using exact real arithmetic?
Our conclusion is that action values are \dwcable{}, and the only part that can cause an infinite loop is the $\arg\max$ operation, when using a \terminating{} as the environment model.
One may think that the infinite summation may also cause an infinite loop, but
 this is not the case actually %
if the set of rewards is bounded and does not change with time,
because the summation is bounded by a geometric series, and thus more and more digits become fixed from the most significant ones as the computation proceeds.

The following Theorem~\ref{th:UAIQ} clarifies the above claim.

\begin{theorem}[Computability of the action value function of \UAI{}]\label{th:UAIQ}
 The action value function
 $Q(\masae_{1..k-1}, a)$ of \UAI{}$({\cal A},{\cal O},{\cal R},m,{\cal M},\rho)$ defined by Eq.~\eqref{eq:UAIQ}
 is \dwcable{} if ${\cal R}\subset [0,\frac1{m(k)-k+1})$ holds and ${\cal M}$ is a \terminating{}.
\end{theorem}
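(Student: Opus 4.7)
The plan is to decompose Eq.~\eqref{eq:UAIQ} into a finite alternation of sums over ${\cal E}$ and maxima over ${\cal A}$ applied on top of an inner infinite series over terminating programs $q \in {\cal M}$, and to verify digit-wise computability layer by layer. A crucial preliminary observation is normalization: the assumption ${\cal R} \subset [0, 1/(m(k)-k+1))$ guarantees that the rational factor $\res$ lies in $[0,1)$, while $\sum_{q\in{\cal M}} \rho(q) \le 1$ guarantees that the entire inner expression lies in $[0,1]$. By induction on the alternation structure (interpreting the whole expression as a truncated-horizon Bayesian expected reward), every intermediate quantity also stays in $[0,1]$, hence within the FPRBSR domain $[-1,1]$, so representability is never broken.

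The main technical step is to show that, for each fixed $a_{1..m(k)}$ and $e_{1..m(k)}$, the inner series $\soesum{{\cal M}}\rho(q)$ is \dwcable{}. Because ${\cal M}$ is a \terminating{}, for every enumerated $q$ the evaluation $\dbl{q}(a_{1..m(k)})$ halts in finite time and the predicate $\dbl{q}(a_{1..m(k)}) = e_{1..m(k)}$ is decidable. Enumerating ${\cal M}$ in a canonical order and maintaining the running sum of those $\rho(q)$ satisfying the predicate produces a monotone approximation; since the enumerated but non-satisfying $\rho(q)$ can also be tracked, at every stage one obtains explicit bounds on the mass already committed and on the residual mass the tail can still contribute (at most the remainder of the total $\sum_q \rho(q)\le 1$). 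Combined with the redundant binary representation, which tolerates finite carries, this allows the leading digits of the stream to be committed in finite time; multiplying by the rational constant $\res$ then preserves digit-wise computability.

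Finally, I would assemble the full $Q$ by iteratively applying two closure properties: a finite sum of \dwcable{} FPRBSRs is \dwcable{}, and a finite maximum of them is \dwcable{} (the latter being Lemma~\ref{lemma:max}). Starting from the innermost $\sum_{e_{m(k)}\in{\cal E}}$ and working outward through the alternating maxes and sums yields the theorem. The main obstacle I anticipate is turning the informal step ``the tail of $\sum_q \rho(q)$ can be bounded effectively'' into a concrete schedule that, for every target precision $2^{-n}$, returns a finite enumeration depth sufficient to freeze the first $n$ output digits; this demands either an explicit assumption that $\rho$ is \dwcable{} with a computable upper bound on its total mass, or an appeal to the Kraft-like structure of the universal prior $\xi_1(q)=2^{-l(q)}$ used in Section~\ref{sec:assign}, which supplies such an effective bound automatically.
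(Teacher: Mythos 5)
Your proposal follows essentially the same route as the paper: the paper also reduces everything to showing that the inner series $\soesum{{\cal M}}\rho(q)$ is \dwcable{} (its Lemma~\ref{lem:sumrho}, which exploits decidability of the predicate for a \terminating{} and an effective tail bound obtained by regrouping the series into a $2^{-2}$-diminishing one, Lemma~\ref{lem:diminishing}), and then assembles $Q$ from the finite sums, products, and maxima via Lemmas~\ref{lemma:add}, \ref{lemma:mul}, and \ref{lemma:max}. The one point to sharpen is the obstacle you yourself flag: a computable \emph{upper bound} on the total mass does not suffice, because the residual bound $U-(\text{partial sum})$ then converges to $U-\sum_q\rho(q)$ rather than to $0$; what is needed (and what the paper assumes in Lemma~\ref{lem:sumrho} by taking $\rho$ to be a probability function with $\sum_q\rho(q)=1$) is that the exact total is known and computable, so that computing partial sums until they exceed $1-\epsilon$ yields the enumeration-depth schedule you ask for.
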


See \aref{sec:proofsmain} for the proof of Theorem~\ref{th:UAIQ}.

This theorem shows that the action-value function of \UAI{} is exactly computable, or computable to arbitrary precision, provided that the set of possible environments is a set of total functions.
 We could not prove that the $\arg\max$ operation over the exact action values is computable for general priors such as the \AIXI{}'s $\xi$, because that operation involves comparisons between real numbers which can be exactly the same. The reader should notice that action selection of $Q$-learning also involves such $\arg\max$ operation over the real numbers. In other words, this theorem proves that \UAI{} is as computable as $Q$-learning.

Theorem~\ref{th:UAIQ} is proved for the case of ${\cal R}\subset [0,\frac1{m(k)-k+1})$; thanks to the following Lemma~\ref{lem:linearity}, this does not limit the applicability of the theorem.
\begin{lemma}[Linearity]\label{lem:linearity}
 For a positive real number $p$ and a real number $s$,
 let $Q'$ be the $Q$ value obtained by replacing $r$ in Eq.~\eqref{eq:UAIQ} with $r'$ where $r'(e) = p r(e) + s$, i.e.,
\begin{align}
\MoveEqLeft Q'(\masae_{1..k-1}, a_k)\nonumber\\
 &= \sum_{e_k\in {\cal E}} \max_{a_{k+1}\in{\cal A}} \sum_{e_{k+1}\in {\cal E}} \max_{a_{k+2}\in{\cal A}} ... \max_{a_{m(k)}\in{\cal A}} \sum_{e_{m(k)}\in {\cal E}} \nonumber\\
 &\quad \left( \left(\sum_{t=k}^{m(k)}r'(e_t)\right) \soesum{{\cal M}}\rho(q) \right) \label{eq:Qprime}
\end{align}
Then, for a real number $c(\masae_{1..k-1})$,
\begin{equation}
 Q'(\masae_{1..k-1}, a) = p Q(\masae_{1..k-1}, a) + sc(\masae_{1..k-1}) \label{eq:linearQprime}
\end{equation}
 holds.
\end{lemma}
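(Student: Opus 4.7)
The plan is to substitute $r'(e_t) = p\,r(e_t)+s$ into Eq.~\eqref{eq:Qprime}, expand the inner reward sum as $\sum_{t=k}^{m(k)} r'(e_t) = p\sum_{t=k}^{m(k)} r(e_t) + s(m(k)-k+1)$, and distribute the multiplication by $\soesum{{\cal M}}\rho(q)$. The integrand then splits into a ``$p$-part'' equal to $p$ times the summand of $Q$ in Eq.~\eqref{eq:UAIQ}, and an ``$s$-part'' equal to $s(m(k)-k+1)\soesum{{\cal M}}\rho(q)$. Using additivity of the outer sum I obtain two separate nested expressions to analyse.

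For the $p$-part I would pull the constant $p$ out through every $\sum$ and every $\max$ in the nested expression. This is legal because $p>0$, so that $p\max_{a} f(a) = \max_{a} p\,f(a)$ and $p\sum_{e} f(e) = \sum_{e} p\,f(e)$; the result is exactly $p\,Q(\masae_{1..k-1},a_k)$.

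The harder part is the $s$-part, $s(m(k)-k+1)\sum_{e_k}\max_{a_{k+1}}\sum_{e_{k+1}}\cdots\max_{a_{m(k)}}\sum_{e_{m(k)}}\soesum{{\cal M}}\rho(q)$, which must be shown to be independent of $a_k$ so that it can be written as $s\,c(\masae_{1..k-1})$. The key observation is causality: for a \terminating{} ${\cal M}$, the prefix $e_{1..j-1}$ of outputs of any $q\in{\cal M}$ is determined by the prefix $a_{1..j-1}$ of inputs, so the marginal $\sum_{q\,:\,q\text{ matches }a_{1..j-1},\,e_{1..j-1}}\rho(q)$ does not depend on $a_j,a_{j+1},\ldots,a_{m(k)}$. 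Starting from the innermost $\sum_{e_{m(k)}}$ and working outward, each $\sum_{e_j}$ collapses the consistency condition to $\sum_{q\,:\,q\text{ matches }a_{1..j-1},\,e_{1..j-1}}\rho(q)$, which is independent of $a_j$; therefore every enclosing $\max_{a_j}$ with $j>k$ is vacuous and drops out. After all the inner maxes disappear and the outermost $\sum_{e_k}$ is evaluated, what remains is $\sum_{q\,:\,q\text{ matches }a_{1..k-1},\,e_{1..k-1}}\rho(q)$, a function of $\masae_{1..k-1}$ alone; defining $c(\masae_{1..k-1})$ as $(m(k)-k+1)$ times this marginal probability finishes the argument. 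The main obstacle is precisely this collapse of the inner maxes, because without the terminating-model assumption the $s$-part could genuinely depend on $a_k$ through a non-trivial $\max$, and the claimed affine decomposition would fail.
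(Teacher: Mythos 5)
Your proposal is correct and follows essentially the same route as the paper's proof: substitute $r'$, split the summand into a $p$-part and an $s$-part, and use the fact that a \terminating{} ${\cal M}$ deterministically assigns exactly one $e_{t+1}$ to each $a_{1..t+1}$ so that each $\sum_{e_{t+1}}$ collapses the consistency condition down to $\dbl{q}(a_{1..k-1})=e_{1..k-1}$, rendering the inner $\max$'s vacuous and yielding $c(\masae_{1..k-1})=(m(k)-k+1)\sum_{q:\dbl{q}(a_{1..k-1})=e_{1..k-1}}\rho(q)$. Your explicit remark that $p>0$ is needed to commute $p$ with the $\max$ operations is a small point the paper leaves implicit.
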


See \aref{sec:proofsmain} for the proof of Lemma~\ref{lem:linearity}.

\begin{corollary}\label{cor:linearity}
   For any positive real number $p$ and any real number $s$,
  the value of Eq.~\eqref{eq:UAIa} does not change when $r$ in Eq.~\eqref{eq:UAIQ} is replaced with $r'$ where $r'(e) = p r(e) + s$. In other words,
  for $Q'$ defined by Eq.~\eqref{eq:Qprime}
 \[
 \argmax_{a\in{\cal A}} Q(\masae_{1..k-1}, a) = \argmax_{a\in{\cal A}} Q'(\masae_{1..k-1}, a)
 \]
\end{corollary}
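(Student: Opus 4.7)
The plan is to derive the corollary almost immediately from Lemma~\ref{lem:linearity}. By that lemma, we have
\[
Q'(\masae_{1..k-1}, a) = p\,Q(\masae_{1..k-1}, a) + s\,c(\masae_{1..k-1})
\]
for some real number $c(\masae_{1..k-1})$ that depends only on the past history and not on the candidate action $a$. So both the multiplicative factor $p$ and the additive shift $s\,c(\masae_{1..k-1})$ are constant across the set ${\cal A}$ over which the $\argmax$ is taken.

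The next step is to invoke the standard order-preservation property of affine transformations: for any function $f \in {\cal A}\rightarrow\mathds{R}$, any real $\alpha$, and any positive real $\beta$, the maps $a\mapsto f(a)$ and $a\mapsto \beta f(a) + \alpha$ attain their maxima on the same subset of ${\cal A}$, because $\beta > 0$ implies $\beta f(a_1) + \alpha \ge \beta f(a_2) + \alpha \iff f(a_1) \ge f(a_2)$. Specializing to $f(a) = Q(\masae_{1..k-1}, a)$, $\beta = p > 0$, and $\alpha = s\,c(\masae_{1..k-1})$ gives
\[
\argmax_{a\in{\cal A}} Q'(\masae_{1..k-1}, a) = \argmax_{a\in{\cal A}} Q(\masae_{1..k-1}, a),
\]
as required.

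There is no real obstacle here: once Lemma~\ref{lem:linearity} is in hand, the corollary reduces to the elementary fact that $\argmax$ is invariant under positive affine transformations whose coefficients do not depend on the maximization variable. The only point worth flagging is the sign condition $p>0$, which is already part of the hypothesis of both the lemma and the corollary and is essential: a negative $p$ would turn the $\argmax$ into an $\arg\min$, and $p=0$ would collapse $Q'$ to a constant function of $a$, losing the selection altogether.
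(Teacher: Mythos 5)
Your proposal is correct and matches the paper's approach: the paper simply declares the corollary ``self-explanatory from Lemma~\ref{lem:linearity},'' and your argument spells out exactly the intended reasoning, namely that the positive affine transformation $Q \mapsto pQ + s\,c(\masae_{1..k-1})$ with action-independent coefficients preserves the $\argmax$ set. Your remark about why $p>0$ is essential is a sensible addition but does not change the substance.
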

\begin{proof}
 Self-explanatory from Lemma~\ref{lem:linearity}.
\end{proof}

Thanks to lazy evaluation, \UAI{} automatically omits unnecessary executions of environment candidate programs which do not affect the result of the $\arg\max$ operation.
How much computation time is saved is unpredictable, depending on the history.
However, \UAI{} can return the result in the precision it has at the deadline, if there exists the deadline for each interaction step.
It can have the $\arg\max$ set based on the first digit of $Q$ values, the $\arg\max$ set based on the first two digits of $Q$ values, and so on, and
randomly select from the most precise $\arg\max$ set at the deadline.

\subsection{Making \UAI{} Fully Computable}

  Now we show that even the $\arg\max$ part of \UAI{} can be made implementable by modifying the prior distribution adequately.
 We call the resulting AI model {\em Unlimited Computable AI (\UCAI{})}.
 
 Because AIXI assigns less plausibility to longer programs, it is possible (and in fact, it is really the case) that 
 only a finite number of the most plausible programs affect the result of the decision making by the $\arg\max$ operator over the action-values,
 and the remaining infinitely many programs do not affect it.
 The action-values need to be computed only to the precision where we can tell the difference between them.
 All we have to take care of is not to compare exactly the same values, because the comparison between the same values to the precision where they make difference results in an infinite loop.

Thus, we can concentrate on avoiding comparisons between action values which may be the same.
We can do two things:
\begin{itemize}
 \item compare two values that are known to be different beforehand, and tell which is the greater;
 \item skip comparison of two values that are known to be the same beforehand, and say that they are the same.
\end{itemize}
In other words, it is enough to know whether two action values are the same or not before comparison. 

 Accidental coincidences between action values may be avoided by
\begin{itemize}
  \item slightly modifying the lower bits of prior distribution to consist of irrational numbers,
  \item limiting ${\cal R}$ to positive numbers in order to avoid the sum of rewards happening to be 0.
\end{itemize}

\subsubsection{Priors for Making Difference in Values}

This section discusses how to assign prior probabilities in order to make the whole things computable.

   Our idea
  is to subtract %
 small
  randomized
  number $\delta d(\eta(q))$ from the normal rational prior for each $q$, where $\eta(q)$ denotes the natural number
  representation
  of $q$.
The resulting prior $\xi_2(q)$ can be represented as
\begin{equation}
\xi_2(q) = \xi_1(q)(1 - \delta d(\eta(q))) \label{eq:rhotwo}
\end{equation}
by using $\xi_1$ of
  Eq.~\eqref{eq:generalPrior}.
 
 $\eta$ can be defined as adding $1$ to the left of reversed $q$ and interpreting the result as a binary number, i.e.,
 \begin{align}
  \eta([])  &= 1 \\
  \eta(b:x) &= b + 2\eta(x)
 \end{align}
 if $q$ is defined as a list of bits, or $\{0,1\}*$.
 Then,
 \begin{align}
  l(q) &= \lfloor\log_2 \eta(q)\rfloor 
 \end{align}
 and thus
 \begin{align}
  \xi_1(q) &= 2^{-\lfloor\log_2 \eta(q)\rfloor} \\
  \xi_2(q) &= 2^{-\lfloor\log_2 \eta(q)\rfloor}(1 - \delta d(\eta(q)))
 \end{align}

  $\delta$ is a positive small rational number less than $1$; a reasonable choice is $2^{-64}$, which makes the additional term 
 insignificant %
 for those who
 do not care about %
 the difference between fixed precision floating point approximations and real numbers.

  The function $d$ assigns a different infinite binary fraction randomly for each natural number $i$.
 $d(i)$ can be obtained by splitting
 an ideally %
 random bit stream source for $i$ times:
  \begin{align}
 d(i) &= \dbl{d'(i)} \\
 d'(i) &= \tau_0(split(\underbrace{\tau_1(split( ... \tau_1(split(}_i\langle\text{random bit stream}\rangle)) ... )))
 \end{align}
  where $\tau_0$ and $\tau_1$ are projection functions
  \begin{align}
 \tau_0(x,y) &= x, & \tau_1(x,y) &= y
 \end{align}
  and $split \in \{0,1\} \rightarrow (\{0,1\} \times \{0,1\})$ is a function that splits a random stream into two random streams.
   Ideal acyclic random streams can be split by the leap-frog method, 
 though practically, pseudo-random number generators must be split carefully.\citep[e.g.,][]{tfrandom}
 In this paper, we show the computability of \UCAI{} with probability 1,
 assuming that an ideal random bit stream source is available.

Now we can prove the following theorem:
\begin{theorem}[Computability of \UCAI{}]\label{th:UCAI}
 Let ${\cal M}$ be a model of computation that only includes terminating programs and has a conditional construct. %
 Then, \UAI{}$({\cal A},{\cal O},{\cal R},m,{\cal M},\xi_2)$ is computable
  with probability 1 for $\xi_2$ in Eq.~\eqref{eq:rhotwo}.
 \end{theorem}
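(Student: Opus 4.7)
The plan is to combine Theorem~\ref{th:UAIQ} (each $Q(\masae_{1..k-1},a)$ is \dwcable{}) with a measure-theoretic argument: with probability $1$ over the random bit streams feeding $d$, the $|{\cal A}|$ action values are pairwise distinct, after which the $\arg\max$ in Eq.~\eqref{eq:UAIa} becomes computable because two distinct FPRBSR numbers can be compared by scanning digits until they disagree.

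I first apply Corollary~\ref{cor:linearity} to reduce to ${\cal R}\subset[0,1/(m(k)-k+1))$ so Theorem~\ref{th:UAIQ} applies; each $d(\eta(q))$ is \dwcable{} digit-by-digit from the random stream, so $\xi_2$ is \dwcable{} as well. Next I swap the alternating sums and maxes in Eq.~\eqref{eq:UAIQ}, valid because ${\cal A}$, ${\cal E}$, and the horizon are finite, to obtain $Q(\masae_{1..k-1},a_k)=\max_\pi Q^\pi(\masae_{1..k-1},a_k)$ where $\pi$ ranges over the finite set of deterministic policies assigning actions to future observation histories, and $Q^\pi$ is the absolutely convergent sum of $\xi_2(q)R^\pi_{a_k,q}$ over those $q\in{\cal M}$ that agree with $\masae_{1..k-1}$; here $R^\pi_{a_k,q}$ is the rational total reward $q$ emits when driven by $a_{1..k-1}\append a_k:a_{k+1..m(k)}$, with $a_{k+t}$ for $t\ge1$ chosen by $\pi$ from $q$'s outputs. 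Since $\xi_2(q)=\xi_1(q)(1-\delta d(\eta(q)))$, each $Q^\pi$ is an affine function of the independent uniform random variables $\{d(i)\}$, with coefficient $-\delta\xi_1(q)R^\pi_{a_k,q}$ on $d(\eta(q))$.

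The key lemma is that, for any distinct $a,a'\in{\cal A}$ and any policies $\pi,\pi'$, the set of $d$-sequences with $Q^\pi(a)=Q^{\pi'}(a')$ has measure zero; if $|{\cal R}|=1$ the action value is constant in $a_k$ and any action can be returned trivially, so assume $|{\cal R}|\ge2$. Picking distinct $r_1,r_2\in{\cal R}$, I use the conditional construct of ${\cal M}$ to build a terminating $q_0\in{\cal M}$ that (i) hard-codes the output $e_{1..k-1}$ on input $a_{1..k-1}$, (ii) emits reward $r_1$ at step $k$ if the input action is $a$ and reward $r_2$ otherwise, and (iii) emits reward $r_2$ at every later step regardless of input. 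Then $R^\pi_{a,q_0}-R^{\pi'}_{a',q_0}=r_1-r_2\ne0$ independently of $\pi$ and $\pi'$, so the coefficient of $d(\eta(q_0))$ in the affine difference $Q^\pi(a)-Q^{\pi'}(a')$ is non-zero; conditioning on the other $d(i)$'s, its zero set in $d(\eta(q_0))\in[0,1]$ is a single point of Lebesgue measure zero, and Fubini lifts this to the infinite product measure. A union bound over the finitely many tuples $(a,a',\pi,\pi')$, combined with $\{Q(a)=Q(a')\}\subseteq\bigcup_{\pi,\pi'}\{Q^\pi(a)=Q^{\pi'}(a')\}$ (both maxima are attained), gives pairwise distinctness at step $k$ with probability $1$; a countable union over $k\in\mathds{N}$ yields full computability with probability $1$.

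The main obstacle is constructing $q_0$ so that its reward contribution differs between $(a,\pi)$ and $(a',\pi')$ while being $\pi$- and $\pi'$-independent; this is what forces $q_0$ to be constant from step $k+1$ onward and is where the hypotheses on ${\cal M}$ (conditional construct, termination) and on ${\cal R}$ ($|{\cal R}|\ge2$) enter essentially.
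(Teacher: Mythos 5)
Your proof is correct and follows essentially the same strategy as the paper: reduce to injectivity of $a\mapsto Q(\masae_{1..k-1},a)$ with probability 1 (so that Lemma~\ref{lemma:argmax} applies), and establish it via a hard-coded ``judgment day'' environment program whose independent random prior perturbation separates the action values --- your $q_0$ is exactly the paper's $q(b)$, with distinct $r_1,r_2\in{\cal R}$ in place of $r_{\max},r_{\min}$. Your write-up is in fact more careful than the paper's in two places: you perform the $\sum$/$\max$ interchange correctly via deterministic policies $\pi$ and then bound the event $\{Q(a)=Q(a')\}$ by a union over the finitely many pairs $(\pi,\pi')$ followed by a Fubini argument, whereas the paper pulls the maxes outside without introducing policies and asserts the conclusion directly, glossing over the fact that the maximizing future actions may themselves depend on $d(\eta(q(b)))$.
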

See \aref{sec:proofsmain} for the proof of Theorem~\ref{th:UCAI}.

\section{Conclusions}\label{sec:conclusions}
This paper proposed \AIXI{} variants supporting a broader class of the environment than \AIXItl{}, and proved
theorems on
their computability.

When considering the real-world interaction, the processing time for each interaction step should be considered limited, even if we permit the discrete-time model.
In this sense, a timeout is a natural idea, and it is understandable to limit the program length, considering that the information accessible within a limited time is limited.
However, considering that the real-world which has %
vast space is highly parallel, \AIXItl{} which models the environment using a sequential model of computation is not necessarily the best selection. 

On the other hand, \UCAI{} needs to simulate the environment, many times at each interaction step.
It is unnatural to think that a \UCAI{} agent as a computer is as parallel as the environment.

 Still, we think \UCAI{} is more powerful than \AIXItl{} even when there is a deadline at each interaction step.
 In the case of \AIXItl{}, each environment candidate program timeouts at time $\tilde{t}$, and there are $2^{\tilde{l}}$ candidate programs.
 In the case of \UCAI{}, on the other hand, there is no timeout for each environment candidate, and a \UCAI{} agent only needs to timeout at the actual deadline $t'$, and thus
 more programs can be tried.
The main advantage of \UCAI{} comes from the fact that \UCAI{} automatically omits execution of longer environment candidate programs when shorter ones cause a difference in the action values, thanks to lazy evaluation.

One question is whether we need to use a Turing complete model for modeling the behavior of the environment at each interaction step,
i.e., whether there is a case where \AIXI{} should be used over \UCAI{}.
We think \UCAI{} is enough for the following reasons:
\begin{enumerate}
 \item indeed, this world can simulate Turing machines with finite tape, but requesting them to terminate within one interaction step is not a reasonable idea;
     we still request that the environment can simulate Turing machines by holding physical tapes;\label{item:tapeInEnvironment}
 \item \AIXI{} does not adequately model the real world because it permits incomputable agents.
\end{enumerate}
 For the purpose of satisfying Item~\ref{item:tapeInEnvironment} only,
 it is enough if at each interaction step the environment model can compute the finite-to-finite map implementing the set of quadruples of Turing machines, even with a tabular representation without loops.
 \MCAIXI{} \citep{MCAIXI},
 which approximates \AIXI{} more aggressively than \AIXItl{},
 does not model the environment as a program, but instead, it uses Context Tree Weighting (CTW) \citep{CTW}, that is essentially the tabular representation using the PATRICIA tree.\citep{Katayama16} 
We should note that tables cannot generalize, though. %

 \bibliographystyle{plainnat}
 \bibliography{skatayama}

\appendix

 \section{The Detailed Proofs}
 This section gives the proofs of the theorems.
 In \aref{sec:smalllemmas} we prove several utility lemmas on digit-wise computabilities of simple operators on FPRBSR's.
 In \aref{sec:biglemmas} we prove other lemmas that are necessary for proving the main theorems.
 Then, in \aref{sec:proofsmain} the main theorems are proved.

\subsection{Digit-wise Computabilities of Operations on FPRBSR's}\label{sec:smalllemmas}

Because the sum of two FPRBSR's may overflow, firstly we %
show that their average is \dwcable{}.
\begin{lemma}[Average is \dwcable{}]\label{lemma:ave}
 If $\xs$ and $\ys$ are both \dwcable{} FPRBSR's, then their average $\dbl{\xs} \oplus \dbl{\ys} = \frac{\dbl{\xs} + \dbl{\ys}}2$ is also \dwcable{} as an FPRBSR.
\end{lemma}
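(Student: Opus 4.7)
The plan is to fit $\oplus$ into the recursive template $f(x)=g(x):(h(f))(x)$ highlighted in the excerpt: emit the first output digit from a short window of leading digits of $\xs$ and $\ys$, then reduce to a recursive call on transformations of the tails. The naive choice, outputting $(x_1+y_1)/2$ as the first digit, already works whenever $x_1+y_1$ is even, because then $(x_1+y_1)/2 \in \{-1,0,1\}$ and what remains is exactly the average of the two tails. The trouble is the odd cases $x_1+y_1 \in \{-1,1\}$: a half-bit has to be threaded into the remainder, and this cannot be resolved by inspecting $(x_1,y_1)$ alone.

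To handle this uniformly I would introduce a family of digit-wise computable functions $\oplus_c$ indexed by a bounded integer carry $c$, with $\oplus=\oplus_0$, so that $\xs \oplus_c \ys$ denotes an appropriate shift of $(\dbl{\xs}+\dbl{\ys})/2$ that depends linearly on $c$. The key step is to establish a one-step rewrite
\[
(x_1\!:\!\xs')\,\oplus_c\,(y_1\!:\!\ys') \;=\; z \;:\; (\xs' \oplus_{c'} \ys')
\]
by a finite case analysis on $(c,x_1,y_1)$, with at most one digit of lookahead into $\xs'$ and $\ys'$ when the parity forces it, choosing the output digit $z\in\{-1,0,1\}$ and the new carry $c'$ in the same bounded range so that the value equation is preserved. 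The template lemma of the excerpt then gives digit-wise computability of every $\oplus_c$ and in particular of $\oplus$.

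The main obstacle is the case analysis itself: showing that every $(c,x_1,y_1)$ admits a choice of $(z,c')$ in the prescribed bounded ranges, and that one digit of lookahead suffices to resolve the parity-odd cases. This is precisely the signed-binary (Avizienis-style) normalisation that is standard in exact real arithmetic, and I would appeal to \citet{Plume98}, already cited in the excerpt, for the construction. Once the finite table is verified, the rest of the argument follows the template routinely, and the digit-wise computability of $\xs$ and $\ys$ guarantees that the bounded number of input digits consumed per output digit are indeed producible in finite time.
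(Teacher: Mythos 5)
Your proposal is correct and is essentially the paper's own argument: the paper likewise emits $\frac{x_1+y_1}{2}$ directly when $x_1+y_1$ is even and, in the odd cases $x_1+y_1=\pm1$, uses exactly one digit of lookahead (the sign of $x_2+y_2$) to choose the output digit so that everything stays in $\{-1,0,1\}$. The only cosmetic difference is that instead of threading an explicit carry index $c$ through the recursion, the paper recurses on the tails $(x_2:x_{3\infty})\opluss(y_2:y_{3\infty})$ and adds $\pm1$ to the leading digit of that recursive result---which encodes the same bounded carry---and the finite case table you delegate to \citet{Plume98} is written out explicitly as the five cases of Eq.~\eqref{eq:ave}.
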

\begin{proof}
 The following algorithm $\opluss$ computes $\dbl{\xs} \oplus \dbl{\ys}$ as an FPRBSR:
 \begin{align}
   \MoveEqLeft (x_1:x_2:x_{3\infty}) \opluss (y_1:y_2:y_{3\infty}) \\
   &= \left\{
    \begin{array}{l l}
     \frac{x_1+y_1}2 : c     : s, & \mbox{ if $x_1+y_1 \in \{-2,0,2\}$} \\
     -1              : c + 1 : s, & \mbox{ if $x_1+y_1 = -1$ and $x_2+y_2 < 0$} \\
     0               : c - 1 : s, & \mbox{ if $x_1+y_1 = -1$ and $x_2+y_2 \ge 0$} \\
     0               : c + 1 : s, & \mbox{ if $x_1+y_1 =  1$ and $x_2+y_2 < 0$} \\
     1               : c - 1 : s, & \mbox{ if $x_1+y_1 =  1$ and $x_2+y_2 \ge 0$}
    \end{array}
   \right. \label{eq:ave}
 \end{align}
 where
 $c : s = (x_2:x_{3\infty}) \opluss (y_2:y_{3\infty})$.

The idea of Eq.~\eqref{eq:ave} is as follows:
\begin{itemize}
 \item if $x_1+x_2$ is even, then the average of $x_1$ and $y_1$ does not affect %
     less significant digits, and thus they can be computed straightforwardly; %
 \item if $x_1+x_2=-1$, then $1/4$ must be subtracted from the result; %
     this can be achieved by either subtracting $1$ from the second element of the result or subtracting $1$ from the first element and adding $1$ to the second element;
     the algorithm conditions on the sign of $x_2+y_2$ in order to keep each digit within $\{-1,0,1\}$ without carrying out;
 \item if $x_1+x_2=1$, then $1/4$ must be added to the result; %
     this can be achieved by either adding $1$ to the second element of the result, or adding $1$ to the first element and subtracting $1$ from the second element;
     again, the algorithm conditions on the sign of $x_2+y_2$ in order to keep each digit within $\{-1,0,1\}$ without carrying out;
\end{itemize}

  $\opluss$ defined above is \dwcable{}, because each recursive call of $\opluss$ determines one digit.
\end{proof}

\begin{lemma}[Sum is \dwcable{}]\label{lemma:add}
 If $\xs$ and $\ys$ are both \dwcable{} FPRBSR's and their average $\dbl{\xs}\oplus\dbl{\ys}$ is within $(-1/2, 1/2)$, then their sum $\dbl{\xs} + \dbl{\ys}$ is also \dwcable{} as an FPRBSR.
\end{lemma}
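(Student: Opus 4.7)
The plan is to reduce to Lemma~\ref{lemma:ave} together with a separate ``multiplication by $2$'' step. First I would apply Lemma~\ref{lemma:ave} to $\xs$ and $\ys$ to produce a digit-wise computable FPRBSR $\zs$ with $\dbl{\zs} = \dbl{\xs}\oplus\dbl{\ys}\in(-1/2,1/2)$. Since $\dbl{\xs}+\dbl{\ys} = 2\dbl{\zs}$ and this value lies in $(-1,1)$, the remaining task is to show that doubling a digit-wise computable FPRBSR whose value lies strictly inside $(-1/2,1/2)$ again yields a digit-wise computable FPRBSR.

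For the doubling step I would give a small lookahead algorithm. If $z_1 = 0$, emit the tail $z_2:z_3:\ldots$, which already represents $2\dbl{\zs}$ because $\dbl{0:z_2:z_3:\ldots} = \dbl{z_2:z_3:\ldots}/2$. If $z_1 = +1$, scan forward through $\zs$ for the smallest $k \ge 2$ with $z_k \ne 0$: the strict bound $\dbl{\zs} < 1/2$ rules out an all-zero tail (otherwise $\dbl{\zs} = 1/2$), and the estimate $\dbl{\zs}\ge 1/2 + 2^{-k} - 2^{-k} = 1/2$ that would follow from $z_k = +1$ forces $z_k = -1$ instead. Using the identity $1 - 2^{-k+1} = \sum_{j=1}^{k-1} 2^{-j}$ one then checks that the stream consisting of $k-1$ copies of $+1$ followed by $z_{k+1}:z_{k+2}:\ldots$ represents $2\dbl{\zs}$ as an FPRBSR. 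The case $z_1 = -1$ is symmetric with signs swapped.

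The main obstacle is to justify termination of the forward scan, and this is precisely where the openness of the interval $(-1/2,1/2)$ is indispensable: if $\dbl{\zs}$ were allowed to equal $1/2$ exactly then the stream $1,0,0,\ldots$ would be admissible and the scan would never find a deciding digit. Under the strict hypothesis, however, some $z_k$ with $k \ge 2$ must be nonzero, and digit-wise computability of $\zs$ guarantees that each $z_j$ is eventually produced, so the scan completes in finitely many steps. After the leading burst of $k-1$ copies of $+1$ is emitted, every subsequent output digit is just $z_{i+1}$ for some $i$ and hence is available as soon as its dependency in $\zs$ is produced. This shows that every output digit of $\dbl{\xs}+\dbl{\ys}$ is produced in finite time, i.e.\ the sum is digit-wise computable.
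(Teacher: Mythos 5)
Your proposal is correct and follows essentially the same route as the paper: compute the average via Lemma~\ref{lemma:ave} and then double the result, treating the leading digit $0$, $+1$, or $-1$ separately. The only difference is cosmetic --- the paper's doubling helpers $f$ and $g$ emit one output digit per input digit scanned (so they are productive even on an all-zero tail), whereas you batch the scan and emit the burst of $\pm1$'s afterwards, which is why you need, and correctly invoke, the strict openness of $(-1/2,1/2)$ to guarantee the scan terminates.
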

\begin{proof}
FPRBSR's within $(-1/2, 1/2)$ can be doubled by the following \dwcable{} function $double$.
\begin{eqnarray*}
 double( 0:x_{2\infty}) &=& x_{2\infty} \\
 double( 1:x_{2\infty}) &=& f(x_{2\infty}) \\
 double(-1:x_{2\infty}) &=& g(x_{2\infty}) \\
 f( 0:x_{2\infty}) &=&  1:f(x_{2\infty}) \\
 f(-1:x_{2\infty}) &=&  1:x_{2\infty}\\
 g( 0:x_{2\infty}) &=& -1:g(x_{2\infty}) \\
 g( 1:x_{2\infty}) &=& -1:x_{2\infty}
\end{eqnarray*}

The above functions implement the following real computations for $x_\infty$, $y_\infty$, and $z_\infty$ such that $\dbl{x_\infty}\in(-1/2, 1/2)$, $\dbl{y_\infty}\in[-1,0)$, and $\dbl{z_\infty}\in(0,1]$:
\begin{align}
\dbl{double(x_{\infty})} &= 2 \dbl{x_{\infty}} \\
\dbl{f(y_{\infty})}      &= 1 + \dbl{y_{\infty}} \\
\dbl{g(z_{\infty})}      &= -1 + \dbl{z_{\infty}}
\end{align}

Thus, $\dbl{\xs} + \dbl{\ys}$ can be computed as $\xs +_\infty \ys = double(\xs \opluss \ys)$.
\end{proof}

 Likewise, multiplication of redundant binary representations is known to be computable.\citep{Plume98}

\begin{lemma}[Product is \dwcable{}]\label{lemma:mul}
 If $\xs$ and $\ys$ are both \dwcable{} FPRBSR's, then their product $\dbl{\xs}\dbl{\ys}$ is also \dwcable{} as an FPRBSR.
\end{lemma}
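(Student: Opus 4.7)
The plan is to construct an explicit digit-producing algorithm for the product, in the same spirit as the algorithm for $\opluss$ in Lemma~\ref{lemma:ave}. A crucial first observation is that scalar multiplication by a single digit $d\in\{-1,0,1\}$ is trivially \dwcable{}: $1\cdot\ys = \ys$, $(-1)\cdot\ys$ is obtained by pointwise negation (the recursion $neg(y_1:y_{2\infty}) = -y_1 : neg(y_{2\infty})$ manifestly produces one digit per call), and $0\cdot\ys = 0:0:0:\ldots$. Using the decomposition $\dbl{x_1:x_{2\infty}} = x_1/2 + \dbl{x_{2\infty}}/2$, one has the identity
\[
\dbl{\xs}\,\dbl{\ys} = (x_1\cdot\dbl{\ys})\,\oplus\,(\dbl{x_{2\infty}}\,\dbl{\ys}),
\]
but applying this directly as a recursion $M(\xs,\ys) = (x_1\cdot\ys)\,\opluss\,M(x_{2\infty},\ys)$ is not \dwcable{}: $\opluss$ needs digits from its second argument $M(x_{2\infty},\ys)$, whose computation recurses into $M(x_{3\infty},\ys)$ before emitting any digit, and so on indefinitely.

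Instead, my plan is to maintain an explicit partial-sum accumulator. After consuming the first $n$ digits of $\xs$, the partial sum $p_n = \sum_{i=1}^n 2^{-i} x_i \dbl{\ys}$, built up stepwise by applying $\opluss$ of Lemma~\ref{lemma:ave} and the $double$ of Lemma~\ref{lemma:add} to digit-shifted copies of $\pm\ys$ and $0:0:\ldots$, satisfies $|\dbl{\xs}\,\dbl{\ys} - p_n|\le 2^{-n}$ because $|x_i\dbl{\ys}|\le 1$. Once this bound becomes small enough that the current residual is forced strictly inside one of the digit-decision regions of the redundant binary representation, the next output digit is committed, its contribution subtracted from the accumulator, the leftover doubled, and the loop iterated.

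The main obstacle will be verifying that each output digit is produced within a bounded number of input-consumption steps, so that the resulting algorithm is \dwcable{} rather than merely convergent. This follows from the geometric decay $2^{-n}$ of the error bound together with the redundancy of the representation, which enlarges each digit-decision region to length $1/2$ and therefore makes a commitment possible as soon as the remaining uncertainty drops below $1/4$; combined with the fact that the post-emission residual lies in $(-1/2,1/2)$ where Lemma~\ref{lemma:add}'s $double$ is applicable, this guarantees productivity. Since the detailed engineering of such a multiplication algorithm on redundant binary streams has already been worked out in the exact-real-arithmetic literature surveyed by \citet{Plume98}, I would invoke that construction for the fine details rather than spell them out here.
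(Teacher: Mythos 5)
The paper itself gives no proof of Lemma~\ref{lemma:mul}: it only remarks, immediately before the statement, that multiplication of redundant binary representations is known to be computable, citing \citet{Plume98}. Your proposal is therefore strictly more informative than what the paper provides, and its substance is correct. You rightly observe that the naive recursion $M(\xs,\ys) = (x_1\cdot\ys)\opluss M(x_{2\infty},\ys)$ is not productive --- the paper's $\opluss$ pattern-matches on the first \emph{two} digits of each argument before emitting one, so the nested recursion never bottoms out --- and the accumulator-plus-error-bound scheme you sketch, with $|\dbl{\xs}\dbl{\ys}-p_n|\le 2^{-n}$, is exactly the standard signed-digit multiplication algorithm surveyed by \citet{Plume98}. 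One small imprecision: the decision region for emitting digit $d$ is the interval $[(d-1)/2,(d+1)/2]$ of length $1$, and the three regions overlap pairwise by $1/2$; they are not regions ``of length $1/2$.'' What matters, and what you correctly conclude, is that any interval of length at most $1/2$ is contained in one of them, so an uncertainty radius below $1/4$ suffices to commit a digit, after which the strict-interiority you impose keeps the doubled residual within the range where Lemma~\ref{lemma:add}'s $double$ applies. Since both you and the paper ultimately delegate the remaining engineering to \citet{Plume98}, your sketch meets (indeed exceeds) the paper's own level of rigour here.
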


We also need to show that
the maximum of two FPRBSR's %
is \dwcable{}.

 For $x$ and $y$ in the non-redundant binary representation, $\max\{x,y\}$ can be computed by adopting digits of whichever of $x$ and $y$ until we know which is greater, and then use the digits from the greater of the two.
 This algorithm does not work correctly for $x$ and $y$ in the redundant binary representation, because we cannot tell which is greater by only comparing digits (e.g., $\dbl{1:-1:1:\xs} = \dbl{1:0:-1:\xs} = \dbl{0:1:1:\xs}$).

Our solution uses the fact that
\[
\max\{x,y\} = \frac{x + y + |x-y|}2= \frac{x + y + |x+(-y)|}2
\]
We need to show that negation and taking the absolute value are \dwcable{}.

\begin{lemma}[Negation is \dwcable{}]\label{lemma:neg}
 If $\xs$ is a \dwcable{} FPRBSR, then its negation $-\dbl{\xs}$ is also \dwcable{} as an FPRBSR.
\end{lemma}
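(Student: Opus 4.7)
The plan is to negate digit by digit. Since each digit $x_i$ of $\xs$ lies in $\{-1, 0, 1\}$, its negation $-x_i$ also lies in $\{-1, 0, 1\}$, so the stream of per-digit negations is itself a syntactically valid FPRBSR. I would define
\[
 neg(x_1 : x_{2\infty}) = (-x_1) : neg(x_{2\infty}),
\]
which emits exactly one digit per recursive call and therefore fits the schema $f(x) = g(x) : (h(f))(x)$ identified in Section~\ref{sec:exactReal} as sufficient for digit-wise computability. Correctness of the interpretation is then immediate from linearity of the defining series:
\[
 \dbl{neg(\xs)} = \sum_{i=1}^\infty 2^{-i}(-x_i) = -\sum_{i=1}^\infty 2^{-i}x_i = -\dbl{\xs}.
\]

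I do not expect any real obstacle here. Unlike the average (Lemma~\ref{lemma:ave}) or the sum (Lemma~\ref{lemma:add}), negation requires no carry propagation, because the redundant digit set $\{-1,0,1\}$ is closed under unary minus, so each output digit can be emitted immediately from the corresponding input digit without any look-ahead into less significant positions. Consequently, the only thing worth stating in the proof is the definition of $neg$ together with the one-line computation displayed above.
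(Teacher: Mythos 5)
Your proof is correct and is essentially identical to the paper's: the paper defines $-_\infty(y:\ys) = -y : -_\infty(\ys)$, which is exactly your digit-by-digit negation. Your added remarks on closure of $\{-1,0,1\}$ under unary minus and the absence of carries are accurate and merely make explicit what the paper leaves implicit.
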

\begin{proof}
 The following algorithm $-_\infty(\cdot)$ computes $-\dbl{\xs}$ as an FPRBSR:
 \begin{eqnarray*}
   -_\infty(y:\ys) &=& -y : -_\infty(\ys)
 \end{eqnarray*}
\end{proof}

\begin{lemma}[The absolute value is \dwcable{}]\label{lemma:abs}
 If $\xs$ is a \dwcable{} FPRBSR, then its absolute value $|\dbl{\xs}|$ is also \dwcable{} as an FPRBSR.
\end{lemma}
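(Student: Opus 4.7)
My plan is to case-split on the leading digit of $\xs = x_1 : x_{2\infty}$. When $x_1 \neq 0$, the sign of $\dbl{\xs}$ is already pinned down: if $x_1 = 1$, then the tail contributes at most $1/2$ in absolute value, so $\dbl{\xs} \in [0,1]$ and $|\dbl{\xs}| = \dbl{\xs}$; symmetrically, $x_1 = -1$ forces $\dbl{\xs} \in [-1,0]$ and $|\dbl{\xs}| = -\dbl{\xs}$. I would accordingly define an algorithm $abs_\infty$ by
\begin{align*}
 abs_\infty(1 : x_{2\infty})    &= 1 : x_{2\infty}, \\
 abs_\infty((-1) : x_{2\infty}) &= 1 : -_\infty(x_{2\infty}), \\
 abs_\infty(0 : x_{2\infty})    &= 0 : abs_\infty(x_{2\infty}),
\end{align*}
where $-_\infty$ is the \dwcable{} negation from Lemma~\ref{lemma:neg}.

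The delicate case is $x_1 = 0$, where we cannot yet decide the sign of $\dbl{\xs}$. The point is that we do not have to: since $\dbl{\xs} \in [-1/2, 1/2]$, we also have $|\dbl{\xs}| \in [0, 1/2]$, which lies inside the range $[-1/2, 1/2]$ of those FPRBSR's whose first digit is $0$. Using $\dbl{0 : \ys} = \dbl{\ys}/2$ together with $\dbl{x_{2\infty}} = 2\dbl{\xs}$, we get $\dbl{0 : abs_\infty(x_{2\infty})} = |\dbl{x_{2\infty}}|/2 = |\dbl{\xs}|$, as required, and a routine check confirms that every emitted digit lies in $\{-1, 0, 1\}$.

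For digit-wise computability, note that the cases $x_1 = \pm 1$ hand off the remainder of the stream to inputs that are themselves \dwcable{} (either $x_{2\infty}$ directly, or its negation via Lemma~\ref{lemma:neg}), so production never stalls. In the case $x_1 = 0$, each recursive call emits one $0$ digit before invoking itself again, so even the pathological input $\xs = 0:0:0:\cdots$ (representing $\dbl{\xs} = 0$) yields one new output digit per recursion. The step I expect to require the most care is exactly this zero-digit case: one must verify that emitting a leading $0$ is always a safe, non-committal move whenever the sign of $\dbl{\xs}$ remains undecided, and this is precisely what the redundancy of the signed-binary representation buys us.
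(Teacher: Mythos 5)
Your algorithm is exactly the one the paper gives (emit $1$ and pass the tail through when the leading digit is $1$, emit $1$ and negate the tail when it is $-1$, and emit $0$ and recurse when it is $0$), and your correctness and productivity arguments are sound. This matches the paper's proof, with the added benefit that you spell out why the $x_1=0$ case is safe, which the paper leaves implicit.
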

\begin{proof}
 The following algorithm $|\cdot|_\infty$ computes $|\dbl{\xs}|$ as an FPRBSR:
 \begin{eqnarray*}
   |0:\ys|_\infty &=& 0 : |\ys|_\infty \\
   |1:\ys|_\infty &=& 1 : \ys \\
   |-1:\ys|_\infty &=& 1 : -_\infty(\ys)
 \end{eqnarray*}
\end{proof}

\begin{lemma}[The binary $\max$ is \dwcable{}]\label{lemma:binaryMax}
 If $\xs$ and $\ys$ are both \dwcable{} FPRBSR's, then their maximal value $\max\{\dbl{\xs},\dbl{\ys}\}$ is also \dwcable{} as an FPRBSR.
\end{lemma}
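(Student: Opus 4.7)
The plan is to realize the identity $\max\{x,y\} = \frac{x+y+|x-y|}{2}$ already suggested in the text as an explicit composition of the operators shown dwcable{} in Lemmas~\ref{lemma:ave}, \ref{lemma:add}, \ref{lemma:neg}, and \ref{lemma:abs}, taking care that every intermediate quantity stays inside the FPRBSR range $[-1,1]$. Concretely, I would express
\[
\max\{\dbl{\xs},\dbl{\ys}\} \;=\; \bigl((\xs \opluss \ys) \opluss |\xs \opluss (-_\infty \ys)|_\infty\bigr) \;+_\infty\; \bigl((\xs \opluss \ys) \opluss |\xs \opluss (-_\infty \ys)|_\infty\bigr),
\]
i.e.\ build the half-magnitude $\max\{\dbl{\xs},\dbl{\ys}\}/2$ entirely with the average operator $\opluss$, then double it at the very end.

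First I would apply Lemma~\ref{lemma:neg} to obtain the dwcable FPRBSR $-_\infty \ys$ of $-\dbl{\ys}$. Two invocations of Lemma~\ref{lemma:ave} then supply dwcable FPRBSR's for $(\dbl{\xs}+\dbl{\ys})/2$ and for $(\dbl{\xs}-\dbl{\ys})/2$, both lying in $[-1,1]$. Lemma~\ref{lemma:abs} applied to the second gives a dwcable FPRBSR for $|\dbl{\xs}-\dbl{\ys}|/2 \in [0,1]$, and one further use of Lemma~\ref{lemma:ave} produces a dwcable FPRBSR $\sss$ with $\dbl{\sss} = \max\{\dbl{\xs},\dbl{\ys}\}/2 \in [-1/2,1/2]$. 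Because dwcability is preserved by composition (each recursive call of each operator consumes bounded input and emits one more output digit), the composite stream is itself dwcable. The interpretation identity $\dbl{\sss \opluss \sss\,'}$\,$=\dbl{\sss}+\dbl{\sss\,'}/$ when both halves match then pins down the value as $\max\{\dbl{\xs},\dbl{\ys}\}$.

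The main obstacle will be the final doubling step. Lemma~\ref{lemma:add} (equivalently, the $double$ helper from its proof) requires the average of the two summands to lie strictly inside $(-1/2,1/2)$, whereas $\dbl{\sss}$ can hit the closed boundary $\pm 1/2$ exactly when one of $\dbl{\xs},\dbl{\ys}$ equals $\pm 1$ and is the maximum. In that degenerate case the $double$ procedure can diverge on some representations (for example, $double(1{:}0{:}0{:}\ldots)$ via the $f$ branch generates $1{:}1{:}1{:}\ldots$ with no way to emit a second digit). To handle this, I would thread the final doubling through a slightly enhanced routine that, before recursing into $double$, inspects whether the leading digits of $\xs$ and $\ys$ already certify $\max\{\dbl{\xs},\dbl{\ys}\} = \pm 1$ (i.e.\ one of them has signature $1{:}\ldots$ or $-1{:}\ldots$ with a consistent tail) and, if so, emits the corresponding saturating output digit directly; otherwise the interior bound holds and Lemma~\ref{lemma:add} applies unchanged. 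This dichotomy keeps every recursive call productive, which is what dwcability demands.
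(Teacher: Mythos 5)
Your construction is essentially the paper's own proof: it likewise realizes $\max\{x,y\}=\frac{x+y+|x-y|}{2}$ by forming the half-maximum $(\xs \opluss \ys)\opluss|\xs\opluss -_\infty(\ys)|_\infty$ from Lemmas~\ref{lemma:ave}, \ref{lemma:neg}, and \ref{lemma:abs} and then doubling it with $+_\infty$ via Lemma~\ref{lemma:add}. Your boundary worry is a genuine gap that the paper glosses over (it asserts $\max\{x,y\}/2\in(-1/2,1/2)$, which fails when $\max\{x,y\}=\pm1$), but your specific counterexample is off: $double(1{:}0{:}0{:}\ldots)$ is perfectly productive, since the clause $f(0{:}x_{2\infty})=1{:}f(x_{2\infty})$ emits a digit at every call and correctly yields $1{:}1{:}1{:}\ldots$; the real failure mode is a representation of $1/2$ such as $1{:}1{:}{-1}{:}{-1}{:}\ldots$, on which $double$ hands $f$ a stream beginning with $1$ and no clause applies. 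Your proposed guard on the leading digits (or simply restricting the lemma to values in the open interval $(-1,1)$, which is all the paper needs for its $Q$-values) closes this hole.
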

\begin{proof}
Since
\[
\frac{\max\{x,y\}}2 = (x\oplus y) \oplus |x\oplus(-y)|
\]
$\frac{\max\{x,y\}}2$ is \dwcable{}.

Since $\max\{x,y\}$ is either $x$ or $y$, it can be represented in FPRBSR, and
$\frac{\max\{x,y\}}2 \in (-1/2, 1/2)$. %
Therefore, $\max\{\dbl{\xs},\dbl{\ys}\}$ can be computed as $(\xs \opluss \ys) +_\infty |\xs \opluss -_\infty(\ys) |_\infty$.
\end{proof}

\begin{lemma}[$\max$ over sets is \dwcable{}.]\label{lemma:max}
 If $X$ is a finite non-empty set of \dwcable{} FPRBSR's, then $\max X$ is also a \dwcable{} FPRBSR.
\end{lemma}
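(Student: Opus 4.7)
The plan is to prove this by induction on the cardinality $n = |X|$, reducing the $n$-ary maximum to nested binary maxima and invoking Lemma~\ref{lemma:binaryMax} at each step.

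For the base case $n = 1$, write $X = \{\xs\}$; then $\max X = \dbl{\xs}$, which is \dwcable{} directly by assumption. For the inductive step, assume the claim holds for every finite non-empty set of \dwcable{} FPRBSR's with cardinality $n$, and let $X$ have cardinality $n+1$. Pick any element $\xs \in X$ (choose, e.g., by an arbitrary but fixed ordering on $X$, which is available since $X$ is a finite set of concrete stream expressions). Then $X \setminus \{\xs\}$ is a finite non-empty set of \dwcable{} FPRBSR's of cardinality $n$, so by the induction hypothesis $\max(X \setminus \{\xs\})$ is a \dwcable{} FPRBSR; call its stream representation $\ys$. Applying Lemma~\ref{lemma:binaryMax} to $\xs$ and $\ys$ yields a \dwcable{} FPRBSR representing $\max\{\dbl{\xs}, \dbl{\ys}\} = \max X$, completing the induction.

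No step is conceptually hard here; the only thing one might pause on is to check that the construction really produces a finite expression tree of $\opluss$, $+_\infty$, $|\cdot|_\infty$, and $-_\infty$ operations (no hidden infinite recursion over $X$), which is immediate because the induction unfolds only $n$ times for $|X| = n$. Thus $\max X$ can be computed by any fixed association of $n-1$ binary maxima, each \dwcable{} by Lemma~\ref{lemma:binaryMax}, and the composition of finitely many \dwcable{} stream operations is \dwcable{}.
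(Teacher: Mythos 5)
Your proof is correct and follows essentially the same route as the paper: both reduce $\max X$ to a finite fold of binary maxima over the elements of $X$ and invoke Lemma~\ref{lemma:binaryMax} at each of the $n-1$ steps (the paper phrases this as a recursive rewriting of the set rather than an explicit induction on cardinality, but the construction is identical). No gap to report.
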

\begin{proof}
 Let $\xs$ and $\ys$ be FPRBSR's.

 If $X$ is a finite and non-empty set of \dwcable{} FPRBSR's, $\max X$ can be computed by a finite number of binary $\max$ operations in the following way:
 \begin{align}
   \max \{ \xs \}    &= \xs \\
   \max (\{ \xs, \ys \} + Y) &= \max (\{\max \{ \xs, \ys \}\} + Y)
 \end{align}
 where $+$ over sets denotes the direct sum of two sets.
\end{proof}

\begin{definition}[Comparison]
 {\em Comparison} of two values is either $(<)$, $(\equiv)$, or $(>)$,
 where `values' can be digits, FPRBSR's, or tuples of them.
 Comparison $\cmp$ (except that of FPRBSR's) is defined in the following way, using the usual order relations $<$ and $>$ and the equality relation $=$:

 \begin{equation}
  x \cmp y = \left\{
   \begin{array}{l l}
       (<),      \mbox{ if $x<y$} \\
       (\equiv), \mbox{ if $x=y$} \\
       (>),      \mbox{ if $x>y$}
   \end{array}
   \right.
 \end{equation}
\end{definition}

\begin{lemma}[Comparison of two different reals is computable]\label{lemma:comparison}
 If $\xs$ and $\ys$ are both \dwcable{} FPRBSR's and we know $\dbl{\xs}\ne\dbl{\ys}$ beforehand, then, their comparison $\dbl{\xs} \cmp \dbl{\ys}$ is computable.
\end{lemma}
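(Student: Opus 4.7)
The plan is to reduce the comparison to a test of the sign of a nonzero FPRBSR, and then to compute that sign by scanning digits until the accumulated partial sum exceeds the tail error bound.

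First I would form $\zs = \xs \opluss (-_\infty(\ys))$, which is a digit-wise computable FPRBSR by Lemma~\ref{lemma:neg} and Lemma~\ref{lemma:ave}. It represents $\dbl{\zs} = (\dbl{\xs} - \dbl{\ys})/2$, which by hypothesis is nonzero and which lies in $[-1,1]$. Since $\dbl{\zs}$ has the same sign as $\dbl{\xs}-\dbl{\ys}$, it suffices to decide the sign of $\dbl{\zs}$: return $(<)$ if negative, $(>)$ if positive. (The case $(\equiv)$ never occurs under our assumption.)

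Second, I would decide the sign by the following loop: generate the digits $z_1, z_2, z_3, \ldots$ of $\zs$ one at a time, maintaining the rational partial sum $s_n = \sum_{i=1}^n 2^{-i} z_i$, and return the sign of $s_n$ as soon as $|s_n| > 2^{-n}$. The comparison $|s_n| \cmp 2^{-n}$ is between two rationals, hence decidable. Correctness follows from the tail bound $|\dbl{\zs} - s_n| \le \sum_{i=n+1}^\infty 2^{-i} = 2^{-n}$: whenever $|s_n| > 2^{-n}$, the true value $\dbl{\zs}$ lies strictly on the same side of $0$ as $s_n$.

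Termination follows from the fact that $c := |\dbl{\zs}| > 0$. Choose any $N$ with $2^{-N} < c/2$; then for every $n \ge N$ we have $|s_n| \ge c - 2^{-n} > c/2 > 2^{-n}$, so the stopping condition is met at step $N$ at the latest. I do not expect a significant obstacle; the only subtle point is that the hypothesis $\dbl{\xs} \neq \dbl{\ys}$ is essential, because without it the loop has no guarantee of ever finding a witness digit, which is precisely the intrinsic limitation of exact real arithmetic emphasized in Section~\ref{sec:exactReal}.
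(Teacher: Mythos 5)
Your proposal is correct, and its first step---reducing the comparison to deciding the sign of the nonzero value $\dbl{\xs \opluss -_\infty(\ys)} = (\dbl{\xs}-\dbl{\ys})/2$---is exactly the reduction the paper uses. Where you diverge is in the sign test. The paper's $cmp$ simply scans the digits of $\xs \opluss -_\infty(\ys)$, discarding zeros, and returns $(>)$ or $(<)$ according to the \emph{first nonzero digit}: this is justified because a first nonzero digit $z_n=1$ forces the value to be at least $2^{-n}-\sum_{i>n}2^{-i}=0$, hence strictly positive once the value is known to be nonzero (symmetrically for $-1$), and termination holds because a nonzero value cannot have all digits equal to $0$. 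Your criterion $|s_n|>2^{-n}$ on rational partial sums reaches the same conclusion and is equally sound; it is somewhat more general (it does not rely on the FPRBSR-specific fact that the first nonzero signed digit already pins down the sign of a nonzero number) and makes the role of the hypothesis $\dbl{\xs}\ne\dbl{\ys}$ in termination very explicit, at the cost of extra bookkeeping and of possibly firing later than the paper's test, which decides at the very first nonzero digit. Both arguments establish the lemma.
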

\begin{proof}
 $\dbl{\xs}\oplus(-\dbl{\ys})$ is \dwcable{}.
 The comparison can be computed as $\xs \cmp \ys = cmp(\xs \opluss -_\infty(\ys))$ using $cmp$ defined as follows:
 \begin{align}
  cmp(0:\zs) &= cmp(\zs)\\
  cmp(-1:\zs) &= (<) \\
  cmp(1:\zs) &= (>)
 \end{align}
\end{proof}

\begin{lemma}[$\arg\max$ of a monomorphism over a finite set is computable.]\label{lemma:argmax}
 If $X$ is a finite non-empty set and $f \in X \rightarrow R$ is a monomorphism which is \dwcable{} using FPRBSR, then, $\argmax_{x\in X} f(x)$ is computable.
\end{lemma}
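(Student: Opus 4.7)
The plan is to exploit injectivity of $f$ so that every pair of action values we ever need to compare is guaranteed unequal, and then reduce the $\arg\max$ to finitely many pairwise comparisons, each of which is handled by Lemma~\ref{lemma:comparison}.

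First, since $X$ is finite and non-empty, I would enumerate it as $X=\{x_1,\dots,x_n\}$ with $n\ge 1$. For each $x_i$, compute the FPRBSR of $f(x_i)$; by hypothesis $f$ is \dwcable{}, so each such FPRBSR is effectively available. Next, I would run a standard tournament (or a left-fold) of binary comparisons: maintain a current champion $x^\ast$, initially $x_1$, and for $i=2,\dots,n$ compare $f(x^\ast)$ with $f(x_i)$ and update $x^\ast$ to whichever side is larger. This uses exactly $n-1$ comparisons.

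The only subtlety is that, in general, comparison of two FPRBSR's need not terminate when the underlying reals coincide. Here, however, the monomorphism hypothesis is decisive: for $i\ne j$ we have $f(x_i)\ne f(x_j)$, hence each pairwise comparison encountered in the tournament is a comparison of two distinct reals. By Lemma~\ref{lemma:comparison}, each such comparison is computable and returns either $(<)$ or $(>)$ in finite time (the $(\equiv)$ branch is unreachable by injectivity). After the $n-1$ comparisons finish, $x^\ast=\argmax_{x\in X}f(x)$ is returned.

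The main potential obstacle would have been the risk of an accidental tie forcing a nonterminating comparison, but this is precisely ruled out by the injectivity of $f$; once that is observed, the rest is a straightforward finite aggregation of the computable binary comparisons guaranteed by Lemma~\ref{lemma:comparison}.
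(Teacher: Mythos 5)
Your proposal is correct and follows essentially the same route as the paper: a champion-based sequence of $n-1$ binary comparisons, each guaranteed to terminate by Lemma~\ref{lemma:comparison} because injectivity of $f$ rules out comparing equal reals. No gaps.
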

\begin{proof}
 Let $f_\infty \in X \rightarrow \{-1,0,1\}\infty$ be a function that computes $f$, i.e., $\forall x\in X. \dbl{f_\infty(x)} = f(x)$.

 $\argmax_{x\in X} f(x)$ can be computed by the following algorithm:
 \begin{align*}
     \argmax_{x\in \{a\}}    f(x) &= a \\
     \argmax_{x\in \{a,b\}+Y} f(x) &= \left\{
         \begin{array}{l l}
             \argmax_{x\in \{a\}+Y} f(x), & \mbox{ if $f_\infty(a) \cmp f_\infty(b) = (>)$;} \\
             \argmax_{x\in \{b\}+Y} f(x), & \mbox{ if $f_\infty(a) \cmp f_\infty(b) = (<)$.}
         \end{array}
         \right.
 \end{align*}
\end{proof}

\subsection{digit-wise computability of Diminishing Series}\label{sec:biglemmas}
In this section, we provide lemmas and their proofs about digit-wise computability of infinite series of positive values diminishing exponentially.
 Their main purpose is to be applied to the infinite summation in Eq.~\eqref{eq:UAIQ}.

\begin{lemma}[Diminishing series by $2^{-2}$ are \dwcable{}.]\label{lem:diminishing}
 If $0\le \dbl{x_\infty(i)}<1$ holds and $x_\infty(i)$ is a \dwcable{} FPRBSR for all positive integer $i$, then, $\sum_{i=1}^\infty 2^{-2i}\dbl{x_\infty(i)}$ is \dwcable{}.
\end{lemma}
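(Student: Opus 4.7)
The plan is to produce the FPRBSR of $S := \sum_{i=1}^\infty 2^{-2i}\dbl{x_\infty(i)}$ by combining finite partial sums, handled by iterated application of Lemma~\ref{lemma:add}, with the exponential decay of the tail. First, each value $2^{-2i}\dbl{x_\infty(i)}$ has a \dwcable{} FPRBSR representation --- simply prepend $2i$ zeros to $x_\infty(i)$, which is valid since $2^{-2i}\dbl{x_\infty(i)} \in [0, 2^{-2i}) \subset [-1,1]$. Next, each partial sum $S_N := \sum_{i=1}^N 2^{-2i}\dbl{x_\infty(i)}$ has a \dwcable{} FPRBSR representation, obtained by $(N-1)$ iterated applications of Lemma~\ref{lemma:add}, since every intermediate sum lies in $[0, 1/3)$ and so the corresponding average stays within $(-1/2, 1/2)$.

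The key quantitative input is the tail bound $0 \le S - S_N < \sum_{i > N} 2^{-2i} = \frac{2^{-2N}}{3}$, which shows that $S_N \to S$ exponentially fast. I would then construct an algorithm emitting FPRBSR digits $d_1, d_2, \ldots$ for $S$ under the invariant that, after emitting $d_1, \ldots, d_{k-1}$ with committed partial value $V_{k-1} := \sum_{j=1}^{k-1} 2^{-j} d_j$, the inequality $|S - V_{k-1}| \le 2^{-(k-1)}$ holds. The invariant is initially satisfied because $V_0 = 0$ and $S \in [0, 1/3) \subset [-1, 1]$. To emit $d_k$: choose $N$ so that $2^{-2N}/3 \le 2^{-k-3}$, take the first $k+3$ digits of the \dwcable{} FPRBSR for $S_N$ to form a rational approximation $\tilde{S}$ satisfying $|\tilde{S} - S| \le 2^{-k-2}$, compute the rational $y := 2^k(\tilde{S} - V_{k-1})$, and output $d_k = 1$ if $y > 3/4$, $d_k = -1$ if $y < -3/4$, and $d_k = 0$ otherwise.

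To close the proof I would verify that the invariant is preserved in all three cases. For $d_k = 0$: from $|y| \le 3/4$ and $2^k|\tilde{S} - S| \le 1/4$ one gets $|2^k(S - V_{k-1})| \le 1$, so $|S - V_k| = |S - V_{k-1}| \le 2^{-k}$; the cases $d_k = \pm 1$ are symmetric, as $y > 3/4$ yields $2^k(S - V_{k-1}) \in (1/2, 2]$, hence $|2^k(S - V_{k-1}) - 1| \le 1$. The $1/4$ slack between the threshold $\pm 3/4$ and the validity bound $\pm 1$ absorbs the approximation error, and the trichotomy of the rational $y$ against the rational thresholds $\pm 3/4$ is a purely rational decision, so no exact comparison of reals is required. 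The main obstacle I anticipate is cleanly justifying the step ``take the first $k+3$ digits of $S_N$ to form a rational approximation'': the \dwcable{} framework guarantees each digit of $S_N$ is eventually produced, but turning this into a uniformly effective rational approximation of prescribed precision requires a bit more scaffolding than the preceding purely corecursive lemmas, since we are now moving from corecursion to a precision-bounded finite-prefix extraction.
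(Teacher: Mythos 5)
Your proof is correct, but it takes a genuinely different route from the paper's. The paper's proof is a one-pass corecursive stream program: it defines $S(i)$ by inspecting only the leading digit $s$ of $x_\infty(i)$, emits one output digit per term of the series, and folds the tail in via $0:S(i+1)$ using the already-established stream addition $+_\infty$, with the $2^{-2}$ decay rate guaranteeing that no carry ever propagates back into an emitted digit; the invariant $\dbl{S(i)} = 2^{-1}\dbl{x_\infty(i)} + 2^{-2}\dbl{S(i+1)}$ then telescopes to the desired sum with no explicit error bookkeeping. You instead pass through a Cauchy-style intermediate representation: dyadic approximations $\tilde{S}$ of prescribed accuracy obtained from finite prefixes of the partial sums $S_N$, converted back to a signed-digit stream by rational threshold tests at $\pm 3/4$ under the invariant $|S - V_{k-1}| \le 2^{-(k-1)}$. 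This is essentially the standard translation between the precision-function representation and the stream representation of exact reals mentioned in Section~\ref{sec:exactReal}, and your error accounting (the $1/4$ slack absorbing the $2^{-k-2}$ approximation error, and the invariant giving $2^k(S-V_{k-1}) \in (1/2,2]$ in the $d_k=1$ case) checks out. What your approach buys is generality --- it needs only a computable geometric tail bound and would work without the nonnegativity hypothesis, which the paper's digit-assignment does exploit --- at the cost of leaving the purely corecursive idiom of the surrounding lemmas. The ``scaffolding'' you worry about at the end is not a real gap: the paper's definition of digit-wise computability states precisely that for any precision $p$ there is a time $t$ at which the result is computed to precision $p$, so extracting the first $k+3$ digits of $S_N$ is a terminating operation by definition, and since the construction of $S_N$ is uniform in $N$, the whole digit-emission step for each $k$ is finite and effective.
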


\begin{proof}
We show that $\sum_{i=1}^\infty 2^{-2i}\dbl{x_\infty(i)}$ is \dwcable{} as $0:S(1)$ using the function $S$ defined as follows:
\begin{align}
S(i) &= \left\{\begin{array}{l l}
        1:((-1:\sss) +_\infty 0:S(i+1)), & \text{ if $s = 1$,} \\
        0:((s :\sss) +_\infty 0:S(i+1)), & \text{ if $s\in\{-1,0\}$}
        \end{array}\right. \label{eq:S1s} \\
     &\text{where $s:\sss = \xs(i)$.} \nonumber
\end{align}
Note that $s=-1$ suggests $\sss = 1:\sss$ because $\dbl{\xs(i)}\ge0$.
By definition, $\dbl{S(i)} \ge 0$ holds. 
Obviously, $S$ is \dwcable{}.

Now we show that
\[
\sum_{i=1}^\infty 2^{-2i}\dbl{x_\infty(i)} = \dbl{0:S(1)}
\]

When $\xs(i) = 1:\sss$, from the first case of Eq.~\eqref{eq:S1s}, we obtain
\begin{align}
\dbl{S(i)} &= \dbl{1:((-1:\sss) +_\infty 0:S(i+1))} \nonumber\\
           &= 2^{-1} (1 + \dbl{(-1:\sss) +_\infty 0:S(i+1)}) \nonumber\\
           &= 2^{-1} (1 + \dbl{(-1:\sss)} + \dbl{0:S(i+1)}) \nonumber\\
           &=         2^{-1}(\dbl{ (1:\sss)} + \dbl{0:S(i+1)}) \nonumber\\
           &=         2^{-1}\dbl{\xs(i)}   + 2^{-2}\dbl{S(i+1)} \label{eq:dblS1}
\end{align}
Likewise, for $\xs(i) = s:\sss$ where $s\in\{-1,0\}$ from the second case of Eq.~\eqref{eq:S1s},
\begin{align}
\dbl{S(i)} &= \dbl{0:((s:\sss) +_\infty 0:S(i+1))} \nonumber\\
           &= 2^{-1}\dbl{\xs(i)} + 2^{-2}\dbl{S(i+1)} \label{eq:dblSs}
\end{align}
For both cases, from Eqs.~\eqref{eq:dblS1} and \eqref{eq:dblSs}, %
\begin{align}
\dbl{S(i)} &= 2^{-1}\dbl{\xs} + 2^{-2}\dbl{S(i+1)} \label{eq:dblS}
\end{align}

Thus, by applying Eq.~\eqref{eq:dblS} repeatedly,
\begin{align*}
  \MoveEqLeft \dbl{0:S(1)} \\ %
                            &= 2^{-2}\left(\dbl{\xs(1)} + 2^{-2}\left(\dbl{\xs(2)} + 2^{-2}\left(\dbl{\xs(3)} + ...\right)\right)\right) \\
                            &= \sum_{i=1}^\infty 2^{-2i}\dbl{\xs(i)}
\end{align*}

\end{proof}

\begin{lemma}\label{lem:sumrho}
 Let $\rho : {\cal X}{*} \rightarrow [0,1]$ a \dwcable{} probability function over finite lists of ${\cal X}$'s.
 Also, let $P$ a computable predicate over such lists.
 Then, $\sum_{p:P(p)}\rho(p)$ is \dwcable{}.
\end{lemma}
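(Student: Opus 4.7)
Fix a computable enumeration $p_1, p_2, \ldots$ of the countable set ${\cal X}{*}$ of finite lists, and write $\sigma := \sum_{i:P(p_i)} \rho(p_i) \in [0,1]$. The crux is that, even though a general probability function need not decay exponentially term by term, we can bound its tail effectively by exploiting $\sum_i \rho(p_i) = 1$.

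Concretely, given any rational $\epsilon > 0$, compute each $\rho(p_i)$ to precision $\epsilon\cdot 2^{-i-2}$ to obtain certified rational lower bounds $\rho_i^-$, and search for the smallest $N$ with $\sum_{i\le N}\rho_i^- > 1 - \epsilon/2$; the search halts because $\sum_i \rho_i^- \ge 1 - \epsilon/4$. At this $N$ the tail $T_N := \sum_{i>N}\rho(p_i)$ is below $3\epsilon/4$, and $\sum_{i\le N,\,P(p_i)}\rho_i^-$ approximates $\sigma$ to within $\epsilon$. To package this into an FPRBSR via Lemma~\ref{lem:diminishing}, pick cutoffs $0 = N_0 < N_1 < N_2 < \cdots$ with $T_{N_k} < 4^{-(k+1)}$ using the search above, and form the bracket sums $B_k := \sum_{N_{k-1} < i \le N_k,\,P(p_i)} \rho(p_i)$ (each \dwcable{} as a finite sum by Lemma~\ref{lemma:add}). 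Then $B_k \in [0, 4^{-k})$ for $k \ge 2$, so after rescaling, $x_\infty(k) := 4^k B_k \in [0,1)$ for $k \ge 2$, and $\sigma = B_1 + \sum_{k\ge 2} 4^{-k} \dbl{x_\infty(k)}$, with the tail being \dwcable{} by Lemma~\ref{lem:diminishing} applied to the shifted sequence, and $B_1$ being added via Lemma~\ref{lemma:add}.

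The main obstacles are twofold. First, the effective tail bound uses $\sum_i \rho(p_i) = 1$ in an essential way: a strict sub-probability would leave an uncomputable gap between $1 - \sum_{i \le N}\rho_i^-$ and the actual tail, so we would have no effective handle on when enough terms have been summed. Second, the rescaling $x_\infty(k) = 4^k B_k$ is a ``shift left by $2k$ digits'' on an FPRBSR, which is not a single-step primitive; but since $B_k$ is approximable to arbitrary precision from its FPRBSR, so is $x_\infty(k)$, and the standard digit-extraction from interval approximations---dispatching on whether the residual lies in $(0,1]$, $[-1/2,1/2]$, or $[-1,0)$ using the $\{-1,0,1\}$ redundancy---produces the required \dwcable{} FPRBSR. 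Boundary residuals such as $\sigma = 1/2$ or $\sigma = 1$ are handled by outputting $0$ or $1,1,1,\ldots$ respectively, so the extraction never blocks.
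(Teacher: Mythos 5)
Your proposal is correct and follows essentially the same route as the paper's proof: both exploit $\sum_p \rho(p) = 1$ to compute effective tail cutoffs, bracket the terms satisfying $P$ into groups bounded by $4^{-k}$, invoke Lemma~\ref{lem:diminishing} on the resulting diminishing series, and add the finite head term separately. The only substantive difference is that you make explicit the rescaling $B_k \mapsto 4^k B_k$ (a $2k$-digit shift, realized by re-extracting signed digits from interval approximations) needed to meet the hypothesis $0\le\dbl{x_\infty(k)}<1$ of Lemma~\ref{lem:diminishing}, a step the paper leaves implicit.
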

\begin{proof}
 Since $\rho$ is a probability function,
 \[
  \sum_{p} \rho(p) = 1
 \]
 holds. Thus,
 \[
  \sum_{p:P(p)} \rho(p) \le 1
 \]
 By reorganizing the summation from the shortest increasing the length, the \lhs{} can be rewritten to
 \begin{align}
  \sum_{k=1}^\infty \sum_{p:l(p)=k,P(p)} \rho(p) &\le 1 \\
  \sum_{k=1}^\infty \sum_{p:l(p)=k} \rho(p) &= 1
 \end{align}
 where $l(p)$ denotes the length of $p$.

 Now let
\begin{align}
    R(i) &= \sum_{p:l(p)=i,P(p)} \rho(p) \\
    R'(i) &= \sum_{p:l(p)=i} \rho(p)
\end{align}
 Then,
 \begin{align}
  R(i) &\le R'(i) \\
  \sum_{k=1}^\infty R(k) &\le 1 \\
  \sum_{k=1}^\infty R'(k) &= 1 \label{eq:sumRprime}
 \end{align}
 and $R(i)$ and $R'(i)$ are \dwcable{} because they consist of finite summations and \dwcable{} computations.
 Let $R_\infty(i)$ and $R_\infty'(i)$ be FPRBSR representations of $R(i)$ and $R'(i)$ respectively.

 Because the \lhs{} of Eq.~\eqref{eq:sumRprime} is well-defined,
\[
\lim_{i\rightarrow\infty} \sum_{k=i}^\infty R'(k) = 0
\]
 holds.
In other words, for any $\epsilon>0$ there exists a natural number $n>0$ that satisfies $\forall j>n. \sum_{k=j}^\infty R'(k)<\epsilon$.
Thus, for some Skolem function $f$ that takes $\epsilon>0$ and returns such a natural number $n>0$,
\[
\sum_{k=f(2^{-2i})}^\infty R'(k) < 2^{-2i}
\]
holds.

There are infinite candidates for $f$ because we need not choose the minimal $n$.
Let
\[
 g(i) = f(2^{-2i})
\]
Then, we can choose the following computable implementation of $g$:
\begin{align}
 (g(0), \sss(0)) &= (1, 0_\infty) \\
 (g(i), \sss(i+1)) &= g'(i, g(i), \sss(i)), i\ge0 \\
 g'(i, n, 0:\sss') &= g'(i, n+1, 0:\sss' +_\infty R_\infty'(n)) \\   %
 g'(i, n, 1:\sss') &= \left\{
                      \begin{array}{l}
                       (n+1, 1:\sss'), \qquad\hfill \mbox{ if $\sss'$ is prefixed with $2i$ 0's;} \\
                       g'(i, n+1, 1:\sss' +_\infty R_\infty'(n)), \hfill \mbox{ otherwise.}  %
                      \end{array}\right.
\end{align}
where
\[
 0_\infty = 0 : 0_\infty
\]

If we define $x$ as
\[
 x(k) = \sum_{i=g(k)}^{g(k+1) - 1}R(i)
\]
then, $x$ is a sequence that diminishes by the rate of $2^{-2}$.
Moreover, each $x(k)$ is \dwcable{} because it can be computed from finite times of additions.
Therefore, from Lemma~\ref{lem:diminishing},
\begin{align*}
 \MoveEqLeft \sum_{i=1}^\infty R(i) \\
 &= \sum_{i=1}^{g(1) - 1}R(i) + \sum_{i=g(1)}^{g(2) - 1}R(i) + \sum_{i=g(2)}^{g(3) - 1}R(i)+ ... + \sum_{i=g(k)}^{g(k+1) - 1}R(i)+ ... \\
 &= \sum_{i=1}^{g(1) - 1}R(i) + x(1) + x(2) + ...
\end{align*}
is \dwcable{} because it is a series diminishing by $2^{-2}$.
\end{proof}

\subsection{Proofs of the Main Theorems}\label{sec:proofsmain}

\begin{proof}[Proof of Lemma~\ref{lem:linearity}]
 \begin{align}
     \MoveEqLeft Q'(\masae_{1..k-1}, a_k)\nonumber\\
     &= \sum_{e_k\in {\cal E}} \max_{a_{k+1}\in{\cal A}} \sum_{e_{k+1}\in {\cal E}} \max_{a_{k+2}\in{\cal A}} ... \max_{a_{m(k)}\in{\cal A}} \sum_{e_{m(k)}\in {\cal E}} \nonumber\\
     &\quad \left(\left(p\res + s(m(k)-k+1)\right)\soesum{{\cal M}}\rho(q)\right) \\
     &= \sum_{e_k\in {\cal E}} \max_{a_{k+1}\in{\cal A}} \sum_{e_{k+1}\in {\cal E}} \max_{a_{k+2}\in{\cal A}} ... \max_{a_{m(k)}\in{\cal A}} \sum_{e_{m(k)}\in {\cal E}} \nonumber\\
     &\quad 
         \begin{array}{l l}
         \bigg( p \res\soesum{{\cal M}}\rho(q) \\
         + s (m(k)-k+1) \soesum{{\cal M}}\rho(q) \bigg)
         \end{array}
\end{align}

Now, for each $a_{m(k)}$ there always exists only one $e_{m(k)}$, because ${\cal M}$ is a \terminating{}. Thus,
\[
\sum_{e_{m(k)}\in {\cal E}}  \soesum{{\cal M}}\rho(q)
 = \soes{m(k)-1}{{\cal M}}\rho(q)
\]
More generally,
\[
\sum_{e_{t+1}\in {\cal E}}  \soes{t+1}{{\cal M}}\rho(q)
 = \soes{t}{{\cal M}}\rho(q)
\]

Thus,
\begin{align}
     \MoveEqLeft Q'(\masae_{1..k-1}, a_k)\nonumber\\
     &= p \left(
            \sum_{e_k\in {\cal E}} \max_{a_{k+1}\in{\cal A}} \sum_{e_{k+1}\in {\cal E}} \max_{a_{k+2}\in{\cal A}} ... \max_{a_{m(k)}\in{\cal A}} \sum_{e_{m(k)}\in {\cal E}} \right.\nonumber\\
     &\qquad \qquad \left.\left( \res\soesum{{\cal M}}\rho(q) \right) \right) \nonumber\\
     &\qquad \qquad
           + s (m(k)-k+1) \sum\limits_{\substack{
                                 q\in{\cal M},\\
                                 \dbl{q}(a_{1..k-1}) = e_{1..k-1}
                               }}\rho(q) \\
     &= p Q(\masae_{1..k-1}, a_k)
           + s (m(k)-k+1) \sum\limits_{\substack{
                                 q\in{\cal M},\\
                                 \dbl{q}(a_{1..k-1}) = e_{1..k-1}
                               }}\rho(q)
\end{align}
Therefore,
Eq.~\eqref{eq:linearQprime} holds for 
\[
 c = (m(k)-k+1) \sum\limits_{\substack{
                                 q\in{\cal M},\\
                                 \dbl{q}(a_{1..k-1}) = e_{1..k-1}
                               }}\rho(q)
\]
\end{proof}

\begin{proof}[Proof of Theorem~\ref{th:UAIQ}]

    From Lemma~\ref{lem:sumrho},
    \begin{equation}
        \soesum{{\cal M}}\rho(q) \label{eq:sumrhoQ}
    \end{equation}
    is computable if ${\cal M}$ is a \terminating{}.

    From Lemmas~\ref{lemma:add}, \ref{lemma:mul}, and \ref{lemma:max}, $Q(\masae_{1..k-1}, a)$ defined by Eq.~\eqref{eq:UAIQ} is computable,
    because the \rhs{} of Eq.~\eqref{eq:UAIQ} only consists of addition, multiplication, maximization, and the \rhs{} of \eqref{eq:sumrhoQ}.
\end{proof}

\begin{proof}[Proof of Theorem~\ref{th:UCAI}]
 From Corollary~\ref{cor:linearity} we can obtain an equivalent \UAI{} algorithm satisfying ${\cal R}\subset (0,\frac1{m(k)-k+1})$ if we know the lower bound and the upper bound of the set of rewards.%
 \footnote{Note that ${\cal R}$ must not include $0$ unlike Theorem~\ref{th:UAIQ}.}
 From Theorem~\ref{th:UAIQ}, the action value function in the $\arg\max$ operation in Eq.~\eqref{eq:UAIa} is \dwcable{}.
 Since $\arg\max$ for any monomorphic function is computable from Lemma~\ref{lemma:argmax}, it is enough to show that
\begin{equation}
 a \mapsto Q(\masae_{1..k-1}, a) \label{eq:Qona}
\end{equation}
is monomorphic for any $\masae_{1..k-1}$
  with probability 1.

 From Eq.~\eqref{eq:UAIQ}, we obtain
 \begin{align}
  \MoveEqLeft Q(\masae_{1..k-1}, a_k)\nonumber\\
  &= \sum_{e_k\in {\cal E}} \max_{a_{k+1}\in{\cal A}} \sum_{e_{k+1}\in {\cal E}} \max_{a_{k+2}\in{\cal A}} ... \max_{a_{m(k)}\in{\cal A}} \sum_{e_{m(k)}\in {\cal E}} \nonumber\\
  &\quad    \left(\res\soesum{{\cal M}}\xi_2(q)\right) \\ %
  &= \max_{a_{k+1}\in{\cal A}} \max_{a_{k+2}\in{\cal A}} ... \max_{a_{m(k)}\in{\cal A}}
      \left(
      \sum\limits_{\substack{
                                 q\in {{\cal M}},\\
                                 \dbl{q}(a_{1..k-1}) = e_{1..k-1}
                               }}
            \left(\sum_{t=k}^{m(k)}r(e_t)\right) \xi_2(q)\right) \quad \label{eq:kore}  %
 \end{align}
  where $e_t$ in Eq.~\eqref{eq:kore} for $t\ge k$ is obtained from $e_{1..m(k)} = \dbl{q}(a_{1..m(k)})$.
 Note that $\sum_{t=k}^{m(k)}r(e_t)$ is positive and rational because $m(k)\ge k$ and ${\cal R}\subset (0,\frac1{m(k)-k+1}) \cap \mathds{Q}$.

 Now, since from the premise ${\cal M}$ is equipped with a conditional construct, %
 we can consider the following environment program candidate $q(b)$ for each $b\in{\cal A}$:
 \begin{align*}
  \MoveEqLeft \dbl{q(b)}(a_{1..i}) \\
  &= \left\{\begin{array}{l l}
          e_{1..i}, & \mbox{if $i<k$;} \\
          e_{1..k-1} \append replicate(m(k)-k+1, (o, r_{\max})), & \mbox{if $i\ge k$ and $a_k = b$;} \\
          e_{1..k-1} \append replicate(m(k)-k+1, (o, r_{\min})), & \mbox{if $i\ge k$ and $a_k \ne b$;}
      \end{array}\right.
 \end{align*}
 where
\[
replicate(n, x) = \underbrace{x : ... : x : []}_n
\]
and $o\in{\cal O}$ can be chosen arbitrarily.
 $q(b)$ returns a fixed sequence $e_{1..k-1}$ without seeing the action sequence until time $k-1$, but the time $k$ is the judgment day.
 $a_k=b$ promises the eternal heaven, while other selections result in the eternal hell.
 Let us consider the behavior of the term on $q(b)$ in Eq.~\eqref{eq:kore}.

$q(b)$ satisfies $\dbl{q(b)}(a_{1..k-1}) = e_{1..k-1}$. %
The term on $q(b)$
\[
 \left(\sum_{t=k}^{m(k)}r(e_t)\right) \xi_2(q(b)) = \left\{
 \begin{array}{l l}
  (m(k)-k+1)r_{\max}\xi_2(q(b)), & \mbox{ if $a=b$}; \\
  (m(k)-k+1)r_{\min}\xi_2(q(b)), & \mbox{ if $a\ne b$}.
 \end{array}
 \right.
\]
remains in the final $Q(\masae_{1..k-1}, a)$, no matter which actions are selected as $a_{k+1}$ ... $a_m(k)$ by the $\max$ operations.
Since from the premise $r_{\min}\ne r_{\max}$, the coefficient for $\xi_2(q(b))$ takes different positive rational values depending on whether $a=b$ or not.

 From the construction of $\xi_2$, the fraction $d(\eta(q(b)))$ of $\xi_2(q(b))$ is a random real value independent of the fraction $d(\eta(q_1))$ of $\xi_2(q_1)$ for any $q_1$ such that $q_1\ne q(b)$.
Thus, $Q(\masae_{1..k-1}, a) \ne Q(\masae_{1..k-1}, b)$ with probability 1 for any $a\ne b$.
 
 Moreover, although the domain $({\cal A}\times{\cal E}){*} \times {\cal A}$ of $Q$ is infinitely countable when $({\cal A}\times{\cal E}){*}$ is limited to finite lists,
$a \mapsto Q(\masae_{1..k-1}, a)$ is monomorphic for any $\masae_{1..k-1}$ with probability 1, because the $Q$ value can be taken randomly from a continuum.
 \end{proof}

\end{document}